\documentclass[12pt]{article}

\usepackage[margin=1in]{geometry}
\usepackage{amsmath,amssymb,amsthm}
\usepackage{booktabs}
\usepackage{graphicx}
\usepackage{algorithm}
\usepackage{algpseudocode}
\usepackage{hyperref}
\usepackage{xcolor}
\usepackage{caption}
\usepackage{subcaption}
\usepackage{natbib}
\usepackage[nopatch=footnote]{microtype}
\usepackage{setspace}
\usepackage{listings}
\usepackage{tcolorbox}
\usepackage{array}
\usepackage{multirow}
\usepackage{tabularx}
\usepackage{longtable}

\newtheorem{proposition}{Proposition}
\newtheorem{lemma}{Lemma}

\newtheorem*{proposition*}{Proposition}
\newtheorem*{lemma*}{Lemma}
\theoremstyle{definition}
\newtheorem{definition}{Definition}
\theoremstyle{remark}

\hypersetup{
  colorlinks=true,
  linkcolor=blue!70!black,
  citecolor=blue!70!black,
  urlcolor=blue!70!black,
}

\title{LambdaRankIC: Directly Optimizing Rank IC for Financial Prediction}

\author{
Yan LIN\\
University of Macau\\
\texttt{yanlin@um.edu.mo}\\
Yihong SU\\
University of Macau\\
\texttt{yc47017@um.edu.mo}\\
Yi YANG\\
Hong Kong University of Science and Technology\\
\texttt{imyiyang@ust.hk}
}

\date{\today}

\begin{document}
\onehalfspacing

\maketitle
\clearpage
\begin{abstract}
In financial predictions, the performance of machine learning models is often assessed by Rank IC, which is the Spearman rank correlation between the model predictions and the realized asset returns. 
Despite its wide adoption, most existing models are trained using regression losses or ranking objectives that may not align with Rank~IC.
We propose \textbf{LambdaRankIC}, a novel learning-to-rank approach that directly optimizes Rank~IC. We circumvent the non-differentiability of the ranking operator by deriving the closed-form expression for the lambda gradients induced by the pairwise rank swaps, which enables efficient gradient-based optimization within the LambdaRank framework. We implement LambdaRankIC as a custom objective in XGBoost. Theoretically, we show that our approach optimizes an upper bound on Rank~IC. 
We evaluate the proposed approach on both simulated and real-world financial data. In simulation studies, LambdaRankIC accurately recovers the true ranking structure in noiseless settings and consistently outperforms regression-based and NDCG-oriented ranking methods under low signal-to-noise ratios and heavy-tailed noise regimes. In empirical experiments using real market data, LambdaRankIC achieves the best out-of-sample performance on evaluation metrics commonly used in finance, including Rank~IC, ICIR, monthly return, and Sharpe ratio. These results show that directly optimizing Rank~IC can yield substantial improvements over conventional learning objectives in financial predictions when the full-order ranking quality is the primary goal.

\end{abstract}
\textbf{Keywords:} Rank~IC; learning to rank; gradient boosted trees; Spearman rank correlation; return prediction;

\bigskip

\section{Introduction}
\label{sec:intro}
Quantitative approaches to investment constitute a cornerstone of modern finance. Early foundational contributions include \cite{markowitz1954portfolio}'s mean-variance framework for portfolio selection and the multi-factor asset pricing models of \cite{fama1993common}. 
More recently, advances in machine learning have prompted both researchers and practitioners to incorporate these data-driven methods into the design of investment strategies \citep{gu2020empirical}. The standard paradigm proceeds in two stages. First, a predictive model is trained to forecast future stock returns. The model can be a linear model or a more complex machine learning model (or the return model). Then in the second stage, the resulting forecast scores are used as signals in downstream tasks such as portfolio construction, factor combination, or risk model estimation \citep{grinold2000active}.

However, given the extremely low signal-to-noise ratio in financial markets, the predictive performance of the return model remains limited, even with the most advanced machine learning algorithms. \citet{gu2020empirical} report monthly out-of-sample $R^2$ values on the order of 0.5\% for their best-performing models such as gradient-boosted decision trees and neural networks, underscoring the difficulty of predicting returns in levels.
Given this low predictive power in absolute terms, 
practitioners and researchers have long recognized that the relative ranking of predicted returns carries far greater economic value than the point forecasts themselves \citep{grinold2000active}. 
For example, in cross-sectional applications, factors are commonly implemented through sorting stocks into decile portfolios based on the factor score and forming a top-minus-bottom long--short portfolio. 
The standard metric for evaluating model performance is the Rank Information Coefficient (Rank IC), defined as the Spearman rank correlation ($\rho$) between predicted and actual returns. Unlike $R^2$, which measures the accuracy of level predictions, Rank~IC captures the model's ability to correctly order stocks, which is operationally relevant for most investment applications. A model that reliably places outperformers above underperformers is valuable for portfolio construction, even if its point estimates of return magnitudes are imprecise.

Hence, there is a misalignment between the training objective and the actual objective that investors wish to optimize for when implementing investment strategies. The prevailing approach in the machine learning for asset pricing literature is to train models by minimizing mean squared error (MSE) of return predictions, yet evaluate them using Rank IC (or related, ranking-based metrics, such as decile spread returns). This train–evaluation mismatch raises a natural question: Can we instead train models to directly optimize Rank~IC?

Directly optimizing Rank IC poses a significant challenge because Spearman's rank correlation is a non-differentiable function of model scores. The ranking operator that maps continuous scores to discrete ranks has zero gradients almost everywhere. This rules out standard gradient-based optimization, including backpropagation in neural networks. One might hope that existing learning-to-rank methods for information retrieval could serve as suitable surrogates for Rank~IC optimization. However, a careful examination reveals that none of the established ranking metrics and objectives align with Rank~IC. Pairwise methods such as RankNet \citep{burges2005learning} optimize a loss closely related to Kendall's $\tau$, which treats all pairwise misordering errors as equally costly regardless of the magnitude of rank displacement. This uniform pairwise weighting is misaligned with Spearman's $\rho$, which penalizes larger rank displacements more heavily. Listwise methods that optimize Normalized Discounted Cumulative Gain (NDCG) \citep{burges2006ranknet,burges2010ranknet} employ a position discount that forces the model to concentrate learning effort on the top of the ranking.
While this approach might be well-suited for long-short portfolio construction \citep{poh2021building,lin2026empirical}, it is insensitive to the quality of ordering in the middle. Other information retrieval metrics such as MAP and ERR are designed for binary relevance ranking and top-heavy ranking scenarios, making them fundamentally incompatible with the continuous, symmetric, full-ranking nature of Rank IC.

In this paper, we propose \textbf{LambdaRankIC}, a novel learning-to-rank objective specifically designed to optimize Rank~IC. 
The core of our approach is the derivation of a simple closed-form expression for the lambda gradient of Rank~IC within the LambdaRank framework. This expression quantifies the exact change in Rank~IC induced by swapping the predicted ranks of any two arbitrary assets. These lambda gradients can then dynamically guide the optimization process toward the true Rank~IC objective, thereby effectively bypassing the zero-gradient limitation. Theoretically, we show that our LambdaRankIC approach optimizes an upper bound on the true Rank~IC.

We implement \textbf{LambdaRankIC} as a custom objective natively integrated into the XGBoost library \citep{wu2010adapting}. \footnote{Although this paper focuses on tree-based models, the proposed objective can also be extended to neural networks.We focus on tree-based models because they are the workhorse of modern financial applications and remain among the best-performing approaches for tabular data, achieving performance comparable to that of neural networks and transformer-based models \citep{gorishniy2021revisiting,bryzgalova2025forest, cong2025growing}.} 
We conduct extensive evaluations and benchmarks of the proposed approach on both simulated and real-world data. Using simulated data, we first show that, in the noise-free setting, our approach can effectively recover the ranking structure of the underlying data-generating process. In noisy settings, we demonstrate that, compared with both regression-based approaches and existing learning-to-rank objectives designed to optimize NDCG, our proposed approach achieves higher Rank~IC on the test set and is more robust to overfitting. 
Using monthly stock data, we benchmark our approach against two broad classes of existing methods: (1) approaches that first predict returns and then construct portfolios, and (2) approaches that directly optimize ranking metrics for stock returns using existing learning-to-rank objectives. We show that our approach achieves the highest out-of-sample Rank~IC and ICIR among all competing models. In terms of portfolio performance, our approach delivers significantly higher returns and a higher Sharpe ratio.

The remainder of this paper is organized as follows. Section \ref{sec:background} reviews related work on machine learning for asset pricing and learning-to-rank methods. Section \ref{sec:formulation} formulates the problem and Section \ref{sec:method} presents the derivation of LambdaRankIC objective, its implementation, as well as the theoretical justification. Section \ref{sec:simulation} reports the results from our simulations. Section \ref{sec:experiments} reports the benchmark results using real financial data. 
Section \ref{sec:conclusion} concludes and outlines directions for future research.

\section{Background and Related Work}
\label{sec:background}

\subsection{Machine Learning in Finance}

Machine learning in Finance builds on the broader evidence that asset returns are predictable from characteristics \citep{fama1993common,harvey2016man}. Empirical evidence documents that nonlinear models can improve model performance relative to linear benchmarks. For example, in a large-scale comparison, \citet{gu2020empirical} show that tree-based and neural network-based models are consistently competitive for cross-sectional return prediction. 
\citet{leippold2022machine} report similar results in international markets.
\citet{feng2024deep} and \citet{gu2021autoencoder} demonstrate performance gains from nonlinear representation learning. Researchers have also explored the applications of more advanced approaches, such as Transformer-based models \citep{kelly2025artificial}, graph neural networks \citep{tian2022inductive}, generative adversarial networks \citep{chen2024deep}, and deep reinforcement learning \citep{cong2021alphaportfolio}. Among these models, tree models have gained much attention due to their vast success in tabular datasets \citep{gorishniy2021revisiting} and efficiency in model training. For example,  
\citet{cong2025growing} adapt the tree split criterion using financial metrics, while \citet{bryzgalova2025forest} use trees to group a series of stocks to span SDF. 

Machine learning models have also been used for risk management in finance, particularly for predicting stock price volatility. For example, \cite{yang2022analyzing} propose a knowledge-driven approach leveraging textual data in financial reports for volatility prediction. \cite{zhang2024let} design a neural network with a long- and short-term memory retrieval architecture to model market volatility. \cite{he2025divide} leverage the unique structure in earnings conference call transcripts and propose a novel divide-and-contrast machine learning method for predicting firm risks. 

In the literature, the standard empirical workflow for modeling and investing remains largely decoupled. Models are typically trained to minimize a generic prediction loss (for example, Mean Squared Error of the stock return). Then, the resulting forecasts are subsequently mapped into portfolio rankings or allocation rules. Recent studies suggest that such a predict–then–rank design can be suboptimal when the ultimate objective is cross-sectional ordering rather than point prediction. In this spirit, \citet{butler2023integrating}, \citet{cong2021alphaportfolio}, and \citet{costa2023distributionally} propose end-to-end frameworks that more tightly couple prediction and portfolio choice. Our paper echoes their call for a better alignment between the training objective and investment by directly optimizing Rank~IC, a metric that is directly related to asset ranking.  

\subsection{Learning to Rank and Its Applications to Finance}
Learning-to-Rank algorithms (LTR), originally developed for information retrieval and recommender systems, have been increasingly recognized for their potential in financial applications.
LTR algorithms are generally categorized into three paradigms \citep{liu2009learning}: pointwise, pairwise, and listwise. Pointwise approaches treat each item independently, making the ranking task to standard regression or classification. Pairwise methods instead model relative preferences between item pairs, optimizing the ordering of pairwise comparisons \citep{burges2005learning, freund2003efficient}. Listwise approaches directly optimize a list-level objective that evaluates the entire ranked list at once \citep{cao2007learning, xia2008listwise}.

These algorithms are typically evaluated using ranking-specific metrics. The most widely adopted include Normalized Discounted Cumulative Gain (NDCG), Mean Average Precision (MAP), and Expected Reciprocal Rank (ERR). Among these, NDCG at position $k$ is defined as:
\begin{equation}
  \NDCG@k = \frac{\text{DCG}@k}{\text{IDCG}@k}, \quad
  \text{DCG}@k = \sum_{r=1}^{k} \frac{g(y_r)}{\log_2(1+r)},
  \label{eq:ndcg}
\end{equation}
where $g(y)$ is the gain function (typically $2^y - 1$) and IDCG is the ideal
(maximum) DCG.

Several studies have applied learning-to-rank methods to the finance domain, arguing that traditional ``regress-then-rank'' approach is sub-optimal because they minimize pointwise prediction error rather than explicitly learning the relative ordering of asset returns. For example, 
\citet{poh2021building} enhanced cross-sectional momentum strategies by applying various LTR algorithms to rank assets directly. They demonstrate that LTR algorithms improve the trading performance measured in terms of Sharpe Ratio compared to traditional predict-then-rank approaches. \citet{song2017stock} combined LTR algorithms with news sentiment indicators for stock portfolio selection, showing that RankNet and ListNet produce robust rankings under different market conditions. \citet{feng2019temporal} proposed a Temporal Relational Ranking framework that incorporates inter-stock relations through temporal graph convolutions, formulating stock prediction as a ranking task to capture time-sensitive dependencies across assets. \citet{ma2022stock} proposed a deep multi-task learning framework (MTSR) that stabilizes and guides the training of complex listwise stock ranking models by leveraging auxiliary tasks to address the training difficulties inherent in small-sample, high-dimensional financial data. \citet{zhang2022constructing} developed the ListFold algorithm, a listwise loss function that simultaneously emphasizes both the top and bottom of the ranked list to better align with the needs of long-short portfolio construction. \citet{saha2021stock} adopted a listwise approach using top-$k$ probability distributions. More recently, \citet{lin2026empirical} provides the first systematic investigation of the performance of all three learning‑to‑rank methods based on an extensive set of firm characteristics spanning three decades. The authors compare these methods with traditional regression and probabilistic models and develop a dual‑ranker framework to mitigate the algorithms' heavy emphasis on top‑ranked items.

Despite these advances, all existing LTR applications in finance inherit ranking metrics from information retrieval as their optimization targets, such as NDCG. However, NDCG is a position-discounted, top-heavy metric that concentrates signal on items near the top of the list, which is misaligned with the needs of cross-sectional asset pricing, where ranking quality across all positions in the cross-section matters. None of the aforementioned studies consider optimizing Rank IC, which is the standard evaluation metric used by quantitative portfolio managers and factor investors. This gap motivates our development of LambdaRankIC.

\section{Problem Formulation}
\label{sec:formulation}
\subsection{Setup and Notation}
We formulate our problem as ranking stocks in the cross-section (e.g., monthly), similar to \citep{gu2020empirical}. In practice, this is analogous to predicting the next-period stock returns (or the ranking of stock returns).
Periodically, investors build a model to generate investment signals. This signal is then used in the downstream task to form investment strategies. One naive strategy would be going long (buying) assets with the highest scores and shorting (selling) assets with the lowest scores, subject to some constraints \citep{coqueret2020machine}. While we formulate the problem as a cross-sectional prediction problem, it can be easily extended to incorporate time-series information or time-series prediction, as long as the problem involves ranking a group of assets.

Let $\mathcal{G} = \{G_1, \ldots, G_m, \ldots , G_M\}$ denote $M$ groups (e.g., months), where
group $G_m$ contains $n_m$ items (e.g., the universe of stocks in a month).\footnote{In the learning to rank literature, this is called a query.}
Each item $i \in G_m$ is associated with a feature vector $\mathbf{x}_{im} \in \mathbb{R}^d$
and a real-valued label $y_{im} \in \mathbb{R}$ representing its future return.\footnote{This corresponds to the relevance score in the learning to rank literature. In the factor synthesis literature, $f$ is sometimes called the \emph{composite
signal} or \emph{alpha score} \citep{tulchinsky2019finding}; here we treat it as a generic scoring function learned by the model.}
A parameterized model $f_\theta:\mathbb{R}^d \to \mathbb{R}$ assigns a score $s_{im} = f_\theta(\mathbf{x}_{im})$
to each item. 
\paragraph{Learning to Rank}
Collecting the scores $s_{im}$ in month $m$ into a vector $\mathbf{s}_{m}$ and the corresponding labels $y_{im}$ into $\mathbf{y}_m$, and defining a loss function $l$ that takes the labels and scores and outputs a real value, the total loss is: 
\begin{equation}
  \mathcal{L}=\frac{1}{M}\sum_m{l(\mathbf{y}_m,\mathbf{s}_m)}
  \label{eq:total_loss}
\end{equation}

A learning-to-rank algorithm finds the optimal $f$ that minimizes the total loss: 
\begin{equation}
  \hat{f} = \arg\min_{\theta} \mathcal{L}
\end{equation}

\subsection{Rank IC (Spearman Rank Correlation)}

\begin{definition}[Predicted rank and label rank]
Within group $G_m$ of size $n$: \footnote{We drop the subscript $m$ for brevity.}
\begin{itemize}
  \item The \emph{predicted rank} $\hat{r}_i \in \{1,\ldots,n\}$ is the position
    of item $i$ when items are sorted by score $s_i$ in descending order
    (rank~1 = highest score).
  \item The \emph{actual rank} $\tilde{y}_i \in \{1,\ldots,n\}$ is the position
    of item $i$ when items are sorted by label $y_i$ in descending order
    (rank~1 = highest label). We assume no ties. 
    Ties are broken by the original data index.
\end{itemize}
\end{definition}

\begin{definition}[Rank IC]
For a group of $n$ assets with predicted ranks $\hat{r} = (\hat{r}_1,\ldots,\hat{r}_n)$
and label ranks $\tilde{y} = (\tilde{y}_1,\ldots,\tilde{y}_n)$, the Spearman rank
correlation~\citep{spearman1904proof} is:
\begin{equation}
  \rho 
  = \frac{\operatorname{Cov}(\hat{r},\tilde{y})}{\sigma_{\hat{r}}\,\sigma_{\tilde{y}}} 
  \label{eq:rankic}
\end{equation}
\end{definition}

Equation~\eqref{eq:rankic} is a shorthand for the Pearson correlation between the two
rank vectors when there are no ties in either ranking.
In the finance literature, $\rho$ computed cross-sectionally each period and averaged
over time is called the \emph{Rank Information Coefficient} (Rank~IC).






\section{The LambdaRankIC Objective and Its Implementations}
\label{sec:method}
\subsection{LambdaRank Framework and LambdaMART}
Before deriving the LambdaRankIC objective, we first describe the LambdaRank 
framework and LambdaMART algorithm, which form the basis of our approach.
The LambdaRank framework follows a probabilistic pairwise setup 
\citep{burges2010ranknet,burges2006ranknet}.
For a given pair of items $(i,j)$ with feature vectors $\mathbf{x}_i, \mathbf{x}_j 
\in \mathbb{R}^d$, a learning-to-rank function $f$ maps their features into scores 
$s_i = f(\mathbf{x}_i)$ and $s_j = f(\mathbf{x}_j)$. The score difference is then 
mapped to a probability via a sigmoid function:
\begin{equation}
  P_{ij} \equiv P(s_i > s_j) = \frac{1}{1 + e^{-\sigma(s_i - s_j)}},
  \label{eq:pij}
\end{equation}
where $\sigma$ is a shape parameter controlling the steepness of the sigmoid, and 
$P_{ij}$ denotes the modeled probability that item $i$ should be ranked above item $j$.
The cost function is then written as the cross-entropy between $P_{ij}$ and the 
target probability:
\begin{equation}
  C = -\bar{P}_{ij} \log P_{ij} - (1 - \bar{P}_{ij}) \log (1 - P_{ij}),
  \label{eq:original_ranknet_loss}
\end{equation}
where $\bar{P}_{ij}$ is the target (ground-truth) probability that item $i$ ranks 
above item $j$ (e.g., $\bar{P}_{ij} = 1$ if $i$ is ranked higher than $j$, 
$\bar{P}_{ij} = 0$ if $j$ is ranked higher, or $\bar{P}_{ij} = 0.5$ if tied).
The gradient with respect to $s_i$ is:
\begin{equation}
\label{lambda}
    \frac{\partial C}{\partial s_i} = \sigma(\bar{P}_{ij} - P_{ij}) = 
    -\frac{\partial C}{\partial s_j} \equiv \lambda_{ij}
\end{equation}
Equation~\ref{lambda} is the original RankNet gradient \citep{burges2005learning}, 
which can be directly optimized via gradient descent. The challenge is that there is a disconnect between the loss function in Eq.~ \eqref{eq:original_ranknet_loss} and the evaluation metrics, such as the Normalized Discounted Cumulative Gain (NDCG). Typically, these 
evaluation metrics are based on the sorted ranking of items, which are generally 
non-differentiable with respect to model parameters. \citet{burges2006ranknet} show 
that scaling $\lambda_{ij}$ by the change in the target metric when swapping items 
$i$ and $j$ provides a good approximation to the gradient of the target metric. For 
example, when the target metric is NDCG, the scaled lambda is:
\begin{equation}
    \lambda_{ij}' = \lambda_{ij} \cdot \Delta\text{NDCG}_{ij}
\end{equation}
where $\Delta\text{NDCG}_{ij}$ is the absolute change in NDCG when items $i$ and $j$ are 
swapped. This is the LambdaRank algorithm. Although originally developed for NDCG, 
\citet{donmez2009efficiently} empirically show that the approach extends to other 
non-differentiable rank metrics such as Mean Average Precision (MAP) and Mean 
Reciprocal Rank (MRR).
The original LambdaRank algorithm was developed for neural networks. Its tree-based 
counterpart, LambdaMART \citep{friedman2001greedy}, has a standard implementation in 
XGBoost \citep{chen2016xgboost}. However, none of the standard implementations 
support Rank~IC as the target metric, which motivates our work. In the following 
section, we derive $\Delta$RankIC for use in the LambdaRank framework.

\subsection{Derivation of \texorpdfstring{$\Delta$}{Delta}RankIC}

We now derive the closed-form expression for the change in Rank~IC when two items 
are swapped.

\begin{lemma}[Variance of uniform ranks]
\label{lem:var}
For $n$ consecutive integers $\{1, 2, \ldots, n\}$,
\begin{equation}
  \operatorname{Var}(1,\ldots,n) = \frac{n^2-1}{12}.
  \label{eq:var}
\end{equation}
\end{lemma}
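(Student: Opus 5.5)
The plan is to compute the population variance directly from the definition, $\operatorname{Var}(1,\ldots,n) = \frac{1}{n}\sum_{k=1}^n k^2 - \bar{k}^2$, where $\bar{k}$ is the mean. This is the normalization under which the Spearman correlation in \eqref{eq:rankic} is computed. Since the divisor $n$ versus $n-1$ cancels in the ratio defining $\rho$, the choice of divisor is immaterial for its use there. We adopt the $1/n$ convention, because that is what yields $\frac{n^2-1}{12}$.

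The steps run in order as follows. First, the mean is obtained from the arithmetic-series identity $\sum_{k=1}^n k = \frac{n(n+1)}{2}$, which gives $\bar{k} = \frac{n+1}{2}$. Second, the classical identity $\sum_{k=1}^n k^2 = \frac{n(n+1)(2n+1)}{6}$ gives
\begin{equation*}
\frac{1}{n}\sum_{k=1}^n k^2 = \frac{(n+1)(2n+1)}{6}.
\end{equation*}
If a self-contained argument is wanted, we will verify this identity by induction on $n$, or by telescoping $(k+1)^3 - k^3 = 3k^2 + 3k + 1$. Third, we subtract the squared mean and factor out $n+1$:
\begin{equation*}
\frac{(n+1)(2n+1)}{6} - \frac{(n+1)^2}{4} = \frac{(n+1)\bigl(2(2n+1) - 3(n+1)\bigr)}{12} = \frac{(n+1)(n-1)}{12}.
\end{equation*}
This equals $\frac{n^2-1}{12}$, as claimed.

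There is no genuine obstacle here. The only point needing care is stating the variance convention explicitly, so that the lemma is consistent with its later use in simplifying $\sigma_{\hat r}\sigma_{\tilde y}$. Because both $\hat r$ and $\tilde y$ are permutations of $\{1,\ldots,n\}$ when there are no ties, each standard deviation equals $\sqrt{(n^2-1)/12}$. The denominator of \eqref{eq:rankic} is therefore the constant $\frac{n^2-1}{12}$, which is what the subsequent $\Delta$RankIC derivation relies on.
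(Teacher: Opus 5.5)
Your proof is correct, and it is the standard computation. The paper states this lemma without proof, but the same sum-of-squares calculation appears in the appendix proof of Proposition~\ref{prop:rankic_loss}, scaled by $n$, as the identity $\frac{n(n+1)(2n+1)}{6}-n\frac{(n+1)^2}{4}=\frac{n(n^2-1)}{12}$; your explicit $1/n$ convention is also the right one, since it matches the covariance normalization used in Eq.~\eqref{eq:rho}.
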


Using Lemma~\ref{lem:var}, the Spearman rank correlation can be rewritten as the
Pearson correlation between predicted ranks $\hat{r}$ and true ranks $\tilde{y}$:

\begin{equation}
  \rho 
  = \frac{\operatorname{Cov}(\hat{r},\tilde{y})}{\sigma_{\hat{r}}\,\sigma_{\tilde{y}}} 
  = \frac{\operatorname{Cov}(\hat{r},\tilde{y})}{\operatorname{Var}(1,\ldots,n)} 
  = \frac{\frac{1}{n}\sum_{i=1}^n \hat{r}_i\tilde{y}_i - \bar{r}\,\bar{\tilde{y}}}
         {(n^2-1)/12} 
  = \frac{12\!\left(\sum_{i=1}^n \hat{r}_i\tilde{y}_i - n\bar{r}\,\bar{\tilde{y}}\right)}
         {n(n^2-1)},
  \label{eq:rho}
\end{equation}
where $\bar{r} = \bar{\tilde{y}} = (n+1)/2$, since both rank vectors are permutations
of $\{1,\ldots,n\}$.

\begin{proposition}[$\Delta$RankIC]
\label{prop:delta_rankic}
Let item $i$ have predicted rank $\hat{r}_i$ and actual rank $\tilde{y}_i$, and let 
item $j$ have predicted rank $\hat{r}_j$ and actual rank $\tilde{y}_j$. 
After swapping items $i$ and $j$ in the predicted ranking (so $i$ moves to 
$\hat{r}_j$ and $j$ moves to $\hat{r}_i$), the change in Rank~IC is:
\begin{equation}
  \boxed{
    \Delta\text{RankIC} \;=\;
    \frac{12\,|\hat{r}_j - \hat{r}_i||\tilde{y}_i - \tilde{y}_j|}{n\,(n^2-1)}.
  }
  \label{eq:delta_rankic}
\end{equation}
\end{proposition}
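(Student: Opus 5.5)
The plan is to work directly from the closed form \eqref{eq:rho}. Its denominator $n(n^2-1)$ and the centering term $n\bar{r}\,\bar{\tilde{y}} = n(n+1)^2/4$ depend only on $n$. This is because the predicted ranks remain a permutation of $\{1,\ldots,n\}$ after the swap: Lemma~\ref{lem:var} keeps $\sigma_{\hat r}$ fixed and the mean stays $(n+1)/2$. The only quantity that changes under the swap is therefore the cross-product sum $S=\sum_{k=1}^n \hat r_k \tilde y_k$, and
\[
\Delta\rho = \frac{12\,\Delta S}{n(n^2-1)}.
\]

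Next I compute $\Delta S$. All terms with $k\notin\{i,j\}$ are untouched. Before the swap the two affected terms contribute $\hat r_i\tilde y_i + \hat r_j \tilde y_j$; after the swap they contribute $\hat r_j \tilde y_i + \hat r_i \tilde y_j$. Subtracting gives
\[
\Delta S = \hat r_j\tilde y_i + \hat r_i\tilde y_j - \hat r_i\tilde y_i - \hat r_j\tilde y_j = (\hat r_j-\hat r_i)(\tilde y_i-\tilde y_j).
\]
This yields the signed change
\[
\rho_{\text{new}}-\rho_{\text{old}} = \frac{12(\hat r_j-\hat r_i)(\tilde y_i-\tilde y_j)}{n(n^2-1)}.
\]

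The only delicate point is reconciling this with the absolute values in \eqref{eq:delta_rankic}. I would state explicitly that, as in the LambdaRank convention recalled above for $\Delta\text{NDCG}_{ij}$, $\Delta\text{RankIC}$ denotes the magnitude of the change. Taking absolute values then gives the boxed formula, since $|ab|=|a||b|$. I would add a remark on the sign. The swap increases $\rho$ exactly when $(\hat r_j-\hat r_i)$ and $(\tilde y_i-\tilde y_j)$ share a sign, that is, when the pair was misordered by the model. This sign is the information the lambda gradient uses to push scores in the right direction.

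Beyond this interpretation, the argument is a direct calculation.
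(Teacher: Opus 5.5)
Your proposal is correct and follows essentially the same route as the paper: both isolate $\sum_k \hat r_k \tilde y_k$ as the only swap-dependent part of Eq.~\eqref{eq:rho}, compute its change $(\hat r_j-\hat r_i)(\tilde y_i-\tilde y_j)$ from the two affected terms, and take absolute values; the paper likewise defines $\Delta\text{RankIC}=|\Delta\rho|$. Your added sign remark, that the swap raises $\rho$ exactly when the pair was misordered, is correct and is a useful supplement that the paper's proof does not spell out.
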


\begin{proof}
From Eq.~\eqref{eq:rho}, the numerator of $\rho$ depends only on 
$D = \sum_{i=1}^n \hat{r}_i\tilde{y}_i$, given a fixed group. Expanding:
\[
  D = \hat{r}_1\tilde{y}_1 + \cdots + \hat{r}_i\tilde{y}_i + \cdots + 
      \hat{r}_j\tilde{y}_j + \cdots + \hat{r}_n\tilde{y}_n.
\]
After swapping $i$ and $j$:
\[
  D' = \hat{r}_1\tilde{y}_1 + \cdots + \hat{r}_j\tilde{y}_i + \cdots + 
       \hat{r}_i\tilde{y}_j + \cdots + \hat{r}_n\tilde{y}_n.
\]
Hence:
\begin{align}
  \Delta D &= D' - D 
           = (\hat{r}_j - \hat{r}_i)\tilde{y}_i + (\hat{r}_i - \hat{r}_j)\tilde{y}_j
           = (\hat{r}_j - \hat{r}_i)(\tilde{y}_i - \tilde{y}_j).
  \label{eq:deltaD}
\end{align}
Since the denominator $n(n^2-1)/12$ and the term $n\bar{r}\,\bar{\tilde{y}}$ are 
invariant to any permutation of the predicted ranking, we have:
\begin{equation}
  \Delta\text{RankIC} = |\Delta\rho|
  = |\frac{12\,D'}{n(n^2-1)} - \frac{12\,D}{n(n^2-1)}|
  = \frac{12\,|\Delta D|}{n(n^2-1)}
  = \frac{12\,|\hat{r}_j-\hat{r}_i|\,|\tilde{y}_i-\tilde{y}_j|}{n\,(n^2-1)}.
\end{equation}
\end{proof}

Equation~\eqref{eq:delta_rankic} has an intuitive interpretation. The magnitude of improvement on Rank~IC from swapping $i$ and $j$ is determined by how far apart the two items are in the predicted ranking ($\hat{r}_j - \hat{r}_i$) and their actual rank separation ($\tilde{y}_i - \tilde{y}_j$). 

\paragraph{Comparison with LambdaRank with NDCG objective (LambdaRank-NDCG)}
In LambdaRank with NDCG as the objective, the scaling factor is:
\begin{equation}
\label{eq:ndcg_define}
  \Delta\text{NDCG}_{ij} = |\frac{2^{y_i}-2^{y_j}}{\text{IDCG}}|
  |\frac{1}{\log_2(1+\hat{r}_i)}-\frac{1}{\log_2(1+\hat{r}_j)}|.
\end{equation}
Since the NDCG discount function $1/\log_2(1+r)$ is convex and monotonically 
decreasing in $r$, the difference 
$\left(\frac{1}{\log_2(1+\hat{r}_i)}-\frac{1}{\log_2(1+\hat{r}_j)}\right)$ 
is substantially larger for swaps near the top of the ranked list than for swaps at 
lower positions. Consequently, the resulting $\lambda$ gradients are concentrated on 
top-ranked items, making LambdaRank-NDCG inherently top-heavy.

Table~\ref{tab:delta_comparison} compares the scaling factor $\Delta$ for the two 
existing LambdaRank objectives in XGBoost and our proposed objective.
$\Delta$RankIC can be computed in $O(1)$ time and is easily integrated as an 
extension of existing objectives.
Algorithm~\ref{alg:gradient_loop} shows the gradient computation loop implemented 
in XGBoost.

\begin{table}[ht]
\centering
\caption{Comparison of LambdaRankIC and Existing Approaches}
\label{tab:delta_comparison}
\resizebox{\textwidth}{!}{%
\begin{tabular}{llll}
\toprule
Objective & $\Delta$ (Scaling Factor) & Pair Weighting & Portfolio Alignment \\
\midrule
\texttt{rank:pairwise} & $1$ & Uniform & equal weight for all misordered pairs \\
\texttt{rank:ndcg}     
  & $\dfrac{|2^{y_i}-2^{y_j}|}{\text{IDCG}}
    |\dfrac{1}{\log_2(1+\hat{r}_i)}-\dfrac{1}{\log_2(1+\hat{r}_j)}|$
  & Concentrated at top ranks &  Optimizes head of ranking \\
\texttt{rank:ic} (ours)       
  & $\dfrac{12|\hat{r}_j-\hat{r}_i||\tilde{y}_i-\tilde{y}_j|}{n(n^2-1)}$
  & Proportional to rank separation & Directly optimize for Rank~IC \\
\bottomrule
\end{tabular}%
}
\end{table}

\begin{algorithm}[ht]
\caption{LambdaRank with RankIC}
\label{alg:gradient_loop}
\begin{algorithmic}[1]
\Require Predictions $\mathbf{s}$, labels $\mathbf{y}$, learning rate $\eta$, 
         features $\mathbf{x}$
\Statex
\Comment{Pre-processing}
\For{each group $g$ \textbf{in parallel}}
  \State Stable-sort elements in group $g$ by label in descending order
  \State Assign ranks $1, 2, \ldots$ in sorted order as label ranks $\tilde{\mathbf{y}}$
\EndFor
\Statex
\Comment{Boosting loop}
\For{iteration $= 1$ \textbf{to} \textsc{MaxIter}}
  \For{each group $g$ \textbf{in parallel}}
    \State $n \leftarrow$ number of elements in group $g$
    \State Sort elements in group $g$ by prediction score $\mathbf{s}$ in descending 
           order
    \State Obtain predicted ranks $\hat{\mathbf{r}}$ from the sorted order
    \State Initialize $\lambda_i \leftarrow 0,\; h_i \leftarrow 0$ for all $i$ in 
           group $g$
    \For{each pair $(i,j)$ sampled from $g$}
      \If{$\tilde{y}_i < \tilde{y}_j$}
        \State swap$(i,j)$
        \Comment{Ensure $\tilde{y}_i \geq \tilde{y}_j$}
      \EndIf
      \State $\Delta \leftarrow \dfrac{12\,(\hat{r}_j - \hat{r}_i)\,
             (\tilde{y}_i - \tilde{y}_j)}{n(n^2-1)}$
      \State $p \leftarrow \sigma(s_i - s_j)$
      \State $\lambda_{ij} \leftarrow (p - 1)\cdot|\Delta|$
      \State $h_{ij} \leftarrow 2\,p\,(1-p)\cdot|\Delta|$
      \State $\lambda_i \mathrel{+}= \lambda_{ij}$;\quad $\lambda_j \mathrel{-}= 
             \lambda_{ij}$
      \State $h_i \mathrel{+}= h_{ij}$;\quad $h_j \mathrel{+}= h_{ij}$
    \EndFor
  \EndFor
  \State Fit tree $T$ using $\{(\mathbf{x}_i,\,\lambda_i,\,h_i)\}$ for all $i$ \Comment{Tree fitting and score update}
  \State $s_i \leftarrow s_i + \eta\cdot T(\mathbf{x}_i)$ for all $i$
\EndFor
\end{algorithmic}
\end{algorithm}

\subsection{Theoretical Justification of LambdaRankIC}
One of the limitations of LambdaRank is that, although it works empirically well  \citep{donmez2009efficiently}, there is no guarantee that weighting $\lambda$ with the change of ranking metric (e.g., $\Delta \text{RankIC}$) from a pairwise swap can optimize the global ranking metrics. In this section, we provide some theoretical justification of LambdaRankIC based on the LambdaLoss framework \citep{wang2018lambdaloss}. 
The paper establishes that LambdaRank is a special configuration of the LambdaLoss framework. More importantly, LambdaRank with $\Delta \mathrm{NDCG}$ weighting optimizes a coarse upper bound on global NDCG. In what follows, we show that, by weighting $\lambda$ with $\Delta \mathrm{RankIC}$, the LambdaRankIC algorithm optimizes an upper bound on $1 - \rho$, so that minimizing the LambdaRankIC loss indirectly maximizes Rank~IC. We first introduce the LambdaLoss framework and establish how LambdaRankIC relates to it. We then show that the LambdaRankIC loss is an upper bound on $1 - \rho$.
\subsubsection{The LambdaLoss Framework}

Following \cite{wang2018lambdaloss}, in Eq.~\eqref{eq:total_loss}, $l(\mathbf{y}_m,\mathbf{s}_m)$ can be written as
\begin{equation}
    l(\mathbf{y},\mathbf{s})=-\log_2 P(\mathbf{y}|\mathbf{s})=-\log_2\sum_{\pi \in \Pi}P(\mathbf{y}|\mathbf{s},\pi)P(\pi|\mathbf{s})
    \label{eq:lambdaloss}
\end{equation}
where $\pi$ is a (latent) ranked list and $P(\pi|\mathbf{s})$ is the distribution conditional on $\mathbf{s}$. We drop the subscript $m$ and use $\mathbf{y}$ and $\mathbf{s}$ for brevity. 
Eq.~\eqref{eq:lambdaloss} is the loss function in the LambdaLoss framework. While Eq.~\eqref{eq:lambdaloss} is not convex in general, it can be minimized by the expectation maximization algorithm \citep{dempster1977maximum}. Essentially, the procedure starts with a random guess of the model parameter $\theta$ and iterates over the E-step and M-step: at the E-step, the distribution of the latent variable $\pi$ is estimated based on the current model $f^{(t)}$; at the M-step, the model parameters are updated by minimizing the expected loss. 
\cite{wang2018lambdaloss} shows that LambdaRank is a special case under the LambdaLoss framework with two choices for $P(\mathbf{y}\mid\mathbf{s},\pi)$ and $P(\pi\mid\mathbf{s})$: 

\paragraph{Hard assignment}  
LambdaRank adopts hard assignment distribution $H(\pi|\mathbf{s})$ to replace $P(\pi|\mathbf{s})$. $H(\pi|\mathbf{s})$ assigns 
probability one to the ranked list $\hat{\pi}$ obtained by sorting items by decreasing scores $\mathbf{s}$, and zero to all other permutations.
In this case,
\begin{equation}
l(\mathbf{y},\mathbf{s})=-\log_2 P(\mathbf{y}|\mathbf{s},\hat{\pi})
\end{equation}

\paragraph{Generalized Bradley--Terry Likelihood}
For items (assets) \(i\) and \(j\), with respective labels (i.e., returns) \(y_i\) and \(y_j\), and ranks \(\hat{\pi}_i\) and \(\hat{\pi}_j\), we define

\begin{equation}
P(\mathbf{y}\mid\mathbf{s},\hat{\pi}) = \prod_{y_i > y_j} P(y_i > y_j \mid s_i, s_j, \hat{\pi}_i, \hat{\pi}_j)
\end{equation}

where
\begin{equation}
  P(y_i > y_j \mid s_i, s_j, \hat{\pi}_i, \hat{\pi}_j)
  = \left[\frac{1}{1 + e^{-\sigma(s_i - s_j)}}\right]^{\omega_{ij}(\hat{\pi}_i,\hat{\pi}_j,y_i,y_j,s_i,s_j)}
  \label{eq:bt_ic}
\end{equation}
and the exponent \(\omega_{ij}(\cdot)\) is the scaling factor (\(\Delta\)) given in Table~\ref{tab:delta_comparison}.

For example, when \(\omega_{ij}(\cdot) = 1\),
\[
l(\mathbf{y}, \mathbf{s}) = \sum_{y_i > y_j} \log_2\!\left(1 + e^{-\sigma (s_i - s_j)}\right).
\]
This is the pairwise LambdaRank loss. When $\omega_{ij}(\cdot) = \Delta \mathrm{NDCG}$, as defined in Eq.~\eqref{eq:ndcg_define}, the resulting loss is the one used in LambdaRank with the NDCG objective.

\begin{definition}[LambdaRankIC Loss]

Under the LambdaLoss framework, letting $\omega_{ij}(\cdot)=\Delta\,\text{RankIC}$ gives us the loss optimized in our LambdaRankIC algorithm. Formally, let $\tilde{\mathbf{y}}$ represent the true ranks presorted by the true label values $\mathbf{y}$, as defined in Eq.~\eqref{eq:rho} and let $\tilde{y}_i$ and $\tilde{y}_j$ denote the true ranks of items \(i\) and \(j\), respectively, where rank \(1\) is the most relevant and larger rank values indicate lower relevance. Replacing $\hat{\pi}_i$, $\hat{\pi}_j$ with $\hat{r}_i$ and $\hat{r}_j$ for notation consistency with Eq.~\eqref{eq:rho}, as they represent predicted ranks based on model scores, we have
\begin{equation}
  \omega_{ij}(.) = \Delta~RankIC
  = \frac{12|\hat{r}_j - \hat{r}_i|\,|\tilde{y}_i - \tilde{y}_j|}{n(n^2 - 1)}
\end{equation}
Hence, 
\begin{equation}
l_C(\mathbf{y},\mathbf{s})= \sum_{y_i>y_j}\frac{12|\hat{r}_j - \hat{r}_i|\,|\tilde{y}_i - \tilde{y}_j|}{n(n^2 - 1)}\,\log_2\!\left(1+e^{-\sigma(s_i-s_j)}\right)
\label{eq:loss_rankic}
\end{equation}
Eq.~\eqref{eq:loss_rankic} is the loss function we are optimizing under the LambdaLoss framework. We call it the \textbf{LambdaRankIC Loss}.
\end{definition}

\subsubsection{\textbf{LambdaRankIC Loss} as an Upper Bound}
We now examine to what extent the LambdaRankIC loss is related to the true RankIC. As we shall see, the LambdaRankIC loss is an upper bound on $1-\rho$.

\begin{proposition}[RankIC loss]
\label{prop:rankic_loss}
Let the rank of item $i$ be written as $\hat{r}_i = 1 + \sum_{j\ne i}\mathbb{I}_{s_i < s_j}$, where $\mathbb{I}_{s_i < s_j}$ is the indicator function equal to 1 when $s_i<s_j$ and zero otherwise. Defining the RankIC loss as $1-\rho$, we have: 
\begin{equation}
    1-\rho 
  = \frac{12}{n(n^2-1)}\sum_{\tilde{y}_i<\tilde{y}_j}
    (\tilde{y}_j-\tilde{y}_i)\,\mathbb{I}_{s_i<s_j}
\end{equation}
\end{proposition}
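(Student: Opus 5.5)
The plan is to start from the closed form in Eq.~\eqref{eq:rho}. It writes $\rho$ as an affine function of the single permutation-dependent quantity $D=\sum_{i}\hat{r}_i\tilde{y}_i$. Both $n\bar{r}\,\bar{\tilde{y}}=n(n+1)^2/4$ and the denominator $n(n^2-1)/12$ do not depend on the predicted ranking. Evaluating Eq.~\eqref{eq:rho} at the ideal ranking $\hat{r}=\tilde{y}$, where $\rho=1$ and $D=D^\ast:=\sum_{k=1}^n k^2$, and subtracting gives
\[
1-\rho=\frac{12}{n(n^2-1)}\bigl(D^\ast-D\bigr).
\]
Lemma~\ref{lem:var} already guarantees the normalisation. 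The proposition therefore reduces to the purely combinatorial identity
\[
D^\ast-D=\sum_{\tilde{y}_i<\tilde{y}_j}(\tilde{y}_j-\tilde{y}_i)\,\mathbb{I}_{s_i<s_j}.
\]

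Next I substitute the representation $\hat{r}_i=1+\sum_{j\neq i}\mathbb{I}_{s_i<s_j}$ into $D$:
\[
D=\sum_i\tilde{y}_i+\sum_{i\neq j}\tilde{y}_i\,\mathbb{I}_{s_i<s_j}.
\]
I then regroup the double sum over unordered pairs $\{i,j\}$. Since there are no ties in the scores, exactly one of $\mathbb{I}_{s_i<s_j}$ and $\mathbb{I}_{s_j<s_i}$ equals $1$. So each unordered pair contributes exactly the true rank $\tilde{y}$ of its lower-scored member.

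Orient each pair so that $\tilde{y}_i<\tilde{y}_j$, meaning $i$ is truly better.
\begin{itemize}
\item If the pair is concordant ($s_i>s_j$), its contribution is $\tilde{y}_j=\max(\tilde{y}_i,\tilde{y}_j)$.
\item If it is discordant ($s_i<s_j$), its contribution is $\tilde{y}_i=\max(\tilde{y}_i,\tilde{y}_j)-(\tilde{y}_j-\tilde{y}_i)$.
\end{itemize}
In both cases the contribution is $\max(\tilde{y}_i,\tilde{y}_j)-(\tilde{y}_j-\tilde{y}_i)\,\mathbb{I}_{s_i<s_j}$. Summing over pairs gives
\[
D=\sum_i\tilde{y}_i+\sum_{\{i,j\}}\max(\tilde{y}_i,\tilde{y}_j)-\sum_{\tilde{y}_i<\tilde{y}_j}(\tilde{y}_j-\tilde{y}_i)\,\mathbb{I}_{s_i<s_j}.
\]

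The first two terms depend only on the labels, not on $\mathbf{s}$. Applying the same formula to a score vector that orders items exactly as $\tilde{y}$ makes the indicator sum vanish, so those two terms together equal $D^\ast$. (One can also check directly that $\sum_k k+\sum_k k(k-1)=\sum_k k^2$.) This yields the required identity, and substituting it into the expression for $1-\rho$ gives the proposition.

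The main obstacle is the pair-regrouping step: being careful that each unordered pair is counted once, and that the orientation convention $\tilde{y}_i<\tilde{y}_j$ (rank $1$ best) matches the indicator $\mathbb{I}_{s_i<s_j}$ marking a misordering. A quick check with $n=2$ and $n=3$ under full reversal confirms the signs: for $n=3$, $\sum_i(\hat{r}_i-\tilde{y}_i)^2/2=4$, which equals the discordant weights $1+2+1$.
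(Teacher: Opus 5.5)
Your proof is correct and follows the paper's argument: substitute $\hat r_i = 1+\sum_{j\neq i}\mathbb{I}_{s_i<s_j}$, regroup the double sum over unordered pairs oriented so that $\tilde y_i<\tilde y_j$, and use $\mathbb{I}_{s_j<s_i}=1-\mathbb{I}_{s_i<s_j}$. The only difference is that you get the label-only constants by evaluating at the ideal ranking, where $\rho=1$. The paper instead computes $\sum_{\tilde y_i<\tilde y_j}\tilde y_j=\sum_k k(k-1)$ and checks $\frac{n(n+1)(2n+1)}{6}-\frac{n(n+1)^2}{4}=\frac{n(n^2-1)}{12}$ by hand; your route is slightly slicker but otherwise the same.
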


The proof of Proposition~\ref{prop:rankic_loss} can be found in the appendix. 
Proposition~\ref{prop:rankic_loss} expresses $1-\rho$ as a sum over discordant 
pairs weighted by the true rank separation $(\tilde{y}_j - \tilde{y}_i)$. Hence swaps between items far apart in true relevance contribute more to the cost than 
swaps between similarly ranked items.
$\mathbb{I}_{s_i < s_j}$ can also be viewed as the zero--one loss, which is bounded above by the logistic loss \citep{wang2018lambdaloss}:
\begin{equation}
    \mathbb{I}_{s_i<s_j} \leq \log_2\!\left(1+e^{-\sigma(s_i-s_j)}\right)
\end{equation}
Hence, we have:
\begin{equation}
\label{eq:logistic_surrogate}
     1-\rho \leq \frac{12}{n(n^2-1)}\sum_{\tilde{y}_i<\tilde{y}_j}
    (\tilde{y}_j-\tilde{y}_i)\,\log_2\!\left(1+e^{-\sigma(s_i-s_j)}\right) = l_{\mathrm{sg}}  
\end{equation}
Eq.~\eqref{eq:logistic_surrogate} establishes that $l_{\mathrm{sg}}$ is the logistic surrogate of the RankIC loss $1-\rho$. 
Assuming no ties in scores, it directly follows that
\begin{align}\label{eq:full_surrogate_chain}
  1-\rho \leq l_{\mathrm{sg}}
  &= \frac{12}{n(n^2-1)}\sum_{\tilde{y}_i<\tilde{y}_j}
    \underbrace{1}_{|\hat{r}_j-\hat{r}_i| \geq 1}
    \cdot(\tilde{y}_j-\tilde{y}_i)\,
    \log_2\!\bigl(1+e^{-\sigma(s_i-s_j)}\bigr) \nonumber \\
  &\leq \frac{12}{n(n^2-1)}\sum_{\tilde{y}_i<\tilde{y}_j}
    |\hat{r}_j-\hat{r}_i|\,|\tilde{y}_j-\tilde{y}_i|\,
    \log_2\!\bigl(1+e^{-\sigma(s_i-s_j)}\bigr)
  = l_C,
\end{align}

given that $|\hat{r}_j-\hat{r}_i|\geq 1$ for all pairs when there are no ties in scores. Eq. \eqref{eq:full_surrogate_chain} establishes that the LambdaRankIC loss $l_C$ is an upper bound on $1 - \rho$. 

\section{Simulations}\label{sec:simulation}
\paragraph{Noiseless Convergence Check}
We first evaluate LambdaRankIC on simulated data with a known ground-truth signal. This allows us to study convergence behavior and sensitivity to distributional assumptions in a controlled setting. We generate a panel of $T = 120$ monthly cross-sections (i.e., groups), each containing $N = 500$ stocks. The target return for stock~$i$ in the next period~$t+1$ is given by
\begin{equation}\label{eq:dgp}
  y_{i,t+1} = f(\mathbf{x}_{i,t}) + \varepsilon_{i,t+1},
\end{equation}
where $f$ is a deterministic signal function, $\mathbf{x}_{i,t}$ is a 10-dimensional feature vector, and $\varepsilon_{i,t+1}$ is the noise term. Both the features and the noise term are drawn i.i.d.\ from $\mathcal{N}(0,1)$. 
Further, we let $f(\mathbf{x}_{i,j}) = \boldsymbol{\beta}^\top \mathbf{x}_{i,j}$,
        where $\boldsymbol{\beta} \in \mathbb{R}^{p}$ is a fixed unit-norm
        coefficient vector drawn once from $\mathcal{N}(\mathbf{0}, \mathbf{I})$.
The first 80~periods serve as the training set and the remaining 40~periods as
the test set. 

We first verify that our custom LambdaRankIC implementation can recover the ranking structure implied by the data-generating process (DGP) by conducting a noiseless convergence check. Specifically, we remove the noise term and set the learning target to $y_{i,t+1} = f(\mathbf{x}_{i,t})$. We then train the model for 1,000 boosting rounds and repeat the experiment across 10 random seeds.

\begin{figure}[H]
\caption{Convergence without Noise}
\label{fig:convergence_noiseless}
    \centering
    \includegraphics[width=\textwidth]{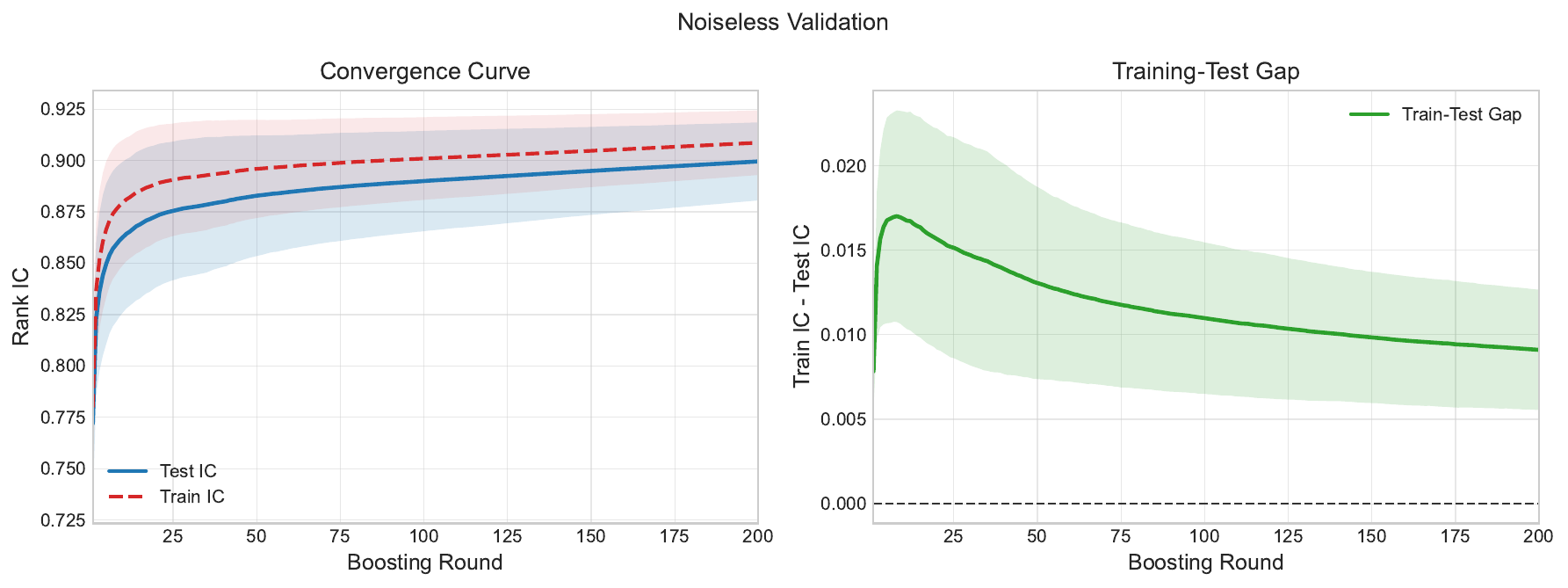}
    
\begin{minipage}{\textwidth}
    \vspace{6pt}
    \footnotesize \emph{Notes.} For readability, we plot only the first 200 boosting rounds. Learning rate $=0.01$, maximum depth $=6$.
  \end{minipage}
\end{figure}

Figure~\ref{fig:convergence_noiseless} reports the convergence curves in the noiseless setting. In the left panel, the model learns the noiseless ranking structure steadily and consistently. Averaged across seeds, the test Rank IC increases from $0.772$ at boosting round 1 to $0.949$ at round 1000, while the corresponding training Rank IC increases from $0.780$ to $0.953$ over the same period. Importantly, the train--test gap shown in the right panel remains small throughout training, rising modestly early on before narrowing steadily, from about $0.008$ at round 1 to about $0.004$ by round 1000. Overall, these dynamics suggest that the remaining gap to perfect ranking is unlikely to be driven by severe overfitting.
Taken together, these results provide a strong implementation sanity check. In the absence of noise, the objective steadily recovers the true ranking structure and achieves a mean best test Rank IC of $0.949$ (standard deviation $0.004$). Moreover, the best performance is attained essentially at the final boosting round across all seeds.

\paragraph{Convergence under Different Signal-to-Noise Regimes}
Next, we evaluate the model's convergence performance with more realistic noisy data across different signal-to-noise regimes. We increase the dimensionality of $\mathbf{x}_{i,t}$ to $p=100$ features. We calibrate the noise variance so that the signal-to-noise ratio
$\mathrm{SNR} \equiv \mathrm{Var}(f) / \mathrm{Var}(\varepsilon)$
takes one of three target values: High ($\mathrm{SNR} = 2.0$), Medium
($\mathrm{SNR} = 0.5$), or Low ($\mathrm{SNR} = 0.1$). The Low regime aims to simulate realistic equity return prediction, where monthly
out-of-sample $R^2$ values are typically below 1\%
\citep{gu2020empirical}.

We train the model with different learning rates ($\eta \in \{0.001, 0.01, 0.1\}$), boosted for 200 rounds, with all other hyperparameters held fixed.
Each configuration is repeated across 10 random seeds.
The evaluation metric is the cross-sectional Rank~IC (Spearman rank
correlation between predicted and realized returns), averaged across all
test-set periods.
Figure~\ref{fig:convergence_snr} reports convergence curves for the $p=10$ DGP (Panel~(a)) and the more challenging $p=100$ DGP (Panel~(b)). Across both panels, the learning dynamics are consistent: larger learning rates produce faster improvement and higher test Rank~IC within the fixed budget of 200 boosting rounds. For $p=10$, the pattern is clear under all SNR regimes: $\eta=0.1$ consistently achieves the highest end-of-training test Rank~IC, followed by $\eta=0.01$, while $\eta=0.001$ remains undertrained. Across SNR regimes, end-of-training test Rank~IC declines as the problem becomes harder (lower SNR). For example, in the high-SNR setting, the end Rank~IC is close to 0.9, whereas in the low-SNR setting it falls below 0.3.

Panel~(b) shows the same qualitative pattern but at a lower performance level, reflecting the increased difficulty of the $p=100$ setting. When $p=100$, all curves shift downward and the performance gap across learning rates widens substantially. In particular, the smaller learning rates ($\eta=0.001$ and $\eta=0.01$) remain far from the best-performing trajectory even at round 200, indicating undertraining under the fixed boosting budget. The largest learning rate, $\eta=0.1$, remains the strongest choice in all three SNR regimes, although its curve is still rising at the final round, suggesting that convergence is not yet fully saturated.

In Panel~(c), we report the train--test gap for the hardest setting ($p=100$, low SNR) under various learning rate settings ($\eta \in \{0.001, 0.01, 0.1\}$). Even in this regime, the model still learns meaningful ranking structure, as both training and test Rank~IC increase over boosting rounds across all three learning rates. For larger learning rates ($\eta=0.01$ and $\eta=0.1$), improvement in test Rank~IC eventually plateaus while the train--test gap widens, indicating progressive overfitting.
\begin{figure}[H]
\caption{Convergence Curves under Different Signal-to-Noise Ratio (SNR) Conditions}
  \label{fig:convergence_snr}
  \centering
  \begin{subfigure}[t]{\textwidth}
    \centering
    \includegraphics[width=\textwidth]{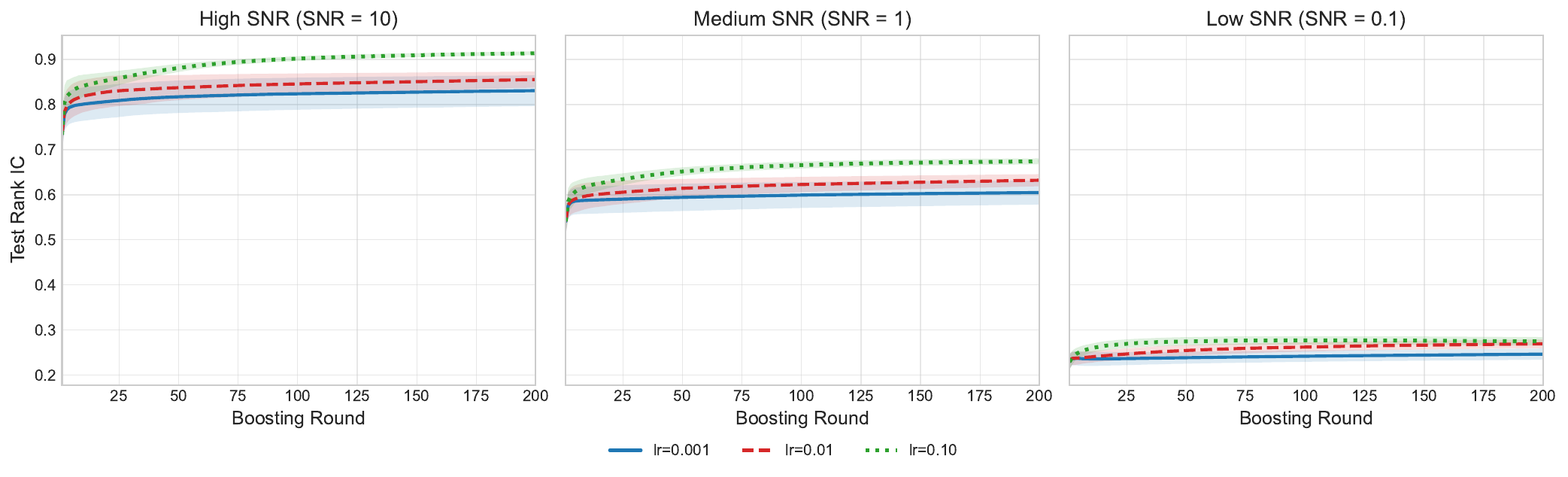}
    \caption{Convergence curves with 10 features}
  \end{subfigure}
  \vspace{6pt}
  \begin{subfigure}[t]{\textwidth}
    \centering
    \includegraphics[width=\textwidth]{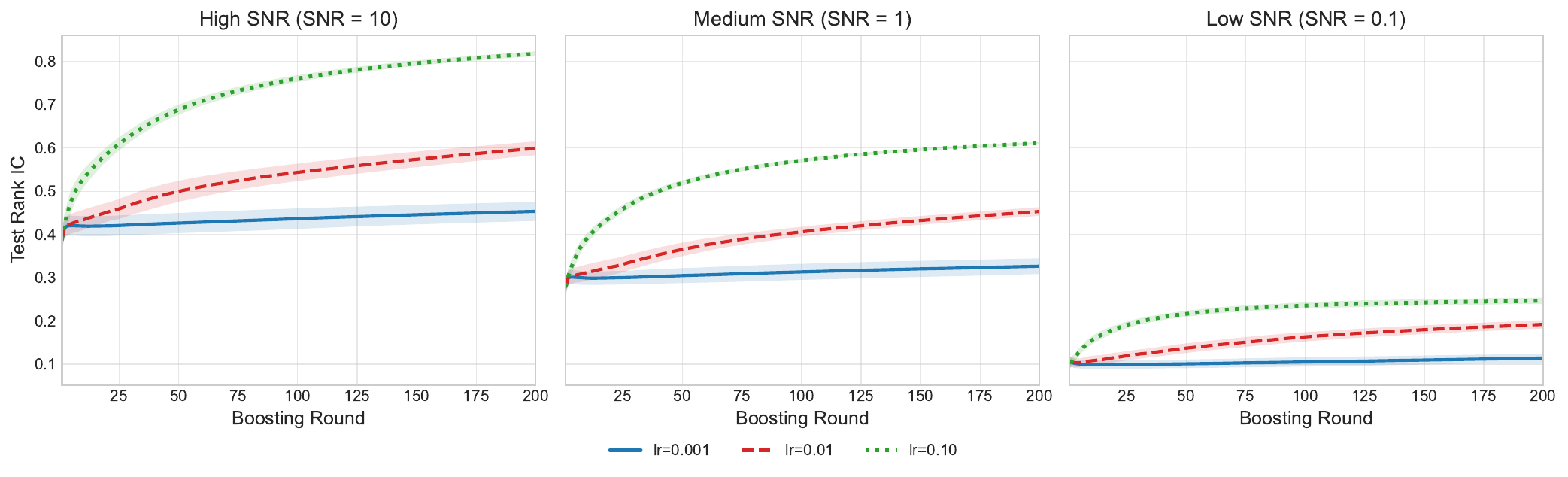}
    \caption{Convergence curves with 100 features}
  \end{subfigure}
  \vspace{6pt}
  \begin{subfigure}[t]{\textwidth}
    \centering
    \includegraphics[width=\textwidth]{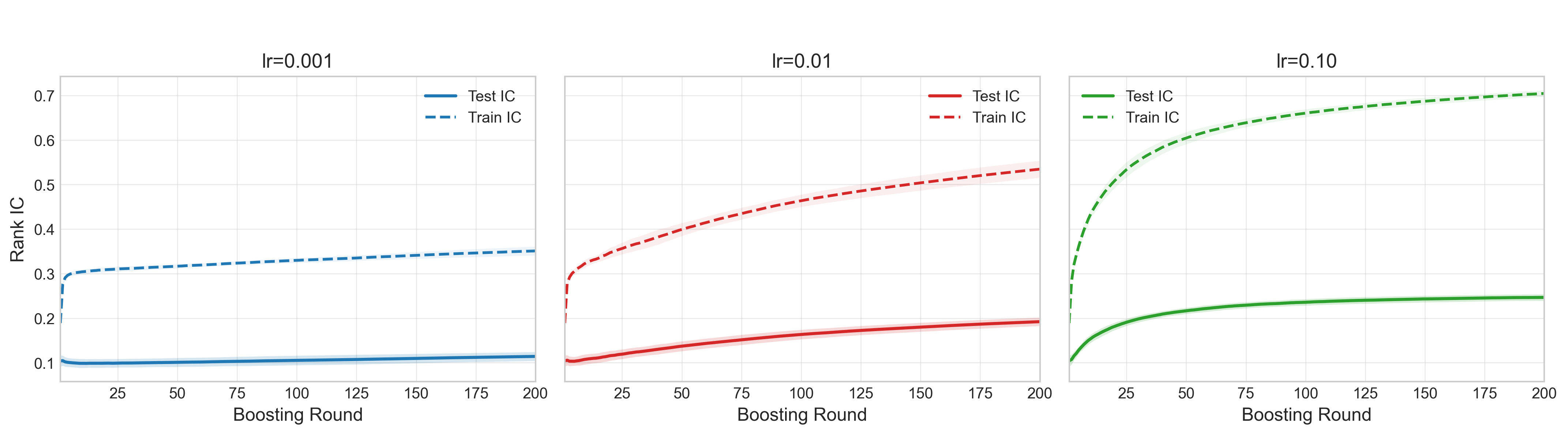}
    \caption{Train--test gap across different learning rate settings (Low SNR, 100 features)}
  \end{subfigure}
  \vspace{6pt}
  \begin{minipage}{\textwidth}
    \vspace{6pt}
    \footnotesize \emph{Notes.} (1) In all the plots, the y-axis is the test Rank~IC, while the x-axis is the boosting runs. (2) In Panel (a) and Panel (b), columns correspond to High, Medium, and Low SNR. (3) In Panel (C) columns correspond to different learning rate settings. (4)Shaded regions show $\pm 1$ standard deviation across 10 seeds.
  \end{minipage}
  
\end{figure}

\paragraph{Comparison with Existing Approaches in Financial Prediction}
In the previous two experiments, we assumed Gaussian noise. In practice, however, asset returns often exhibit heavy tails and excess kurtosis \citep{cont2001empirical,harvey2016man}. To better reflect this reality and investigate whether the proposed approach is more robust to heavy-tailed noise, our final simulation compares the proposed LambdaRankIC objective with two existing approaches under a heavy-tailed DGP: LambdaRank with NDCG objective and squared-error regression.
Specifically, we consider a linear DGP with $p=100$ features. We let $\varepsilon_{i,t+1}$ in Eq.~\eqref{eq:dgp} follow a Student-$t$ distribution with 5 degrees of freedom and rescale it to achieve the target SNR level.
We then train the same XGBoost model with different objectives: \texttt{rank:ic}, \texttt{rank:ndcg}, and \texttt{reg:squarederror}, while keeping the tree structure and boosting hyperparameters constant.

\begin{figure}[H]
\caption{Comparison with Existing Approaches}
\label{fig:comparison}
  \centering
  \begin{subfigure}{\textwidth}
    \centering
      
    \includegraphics[width=\textwidth]{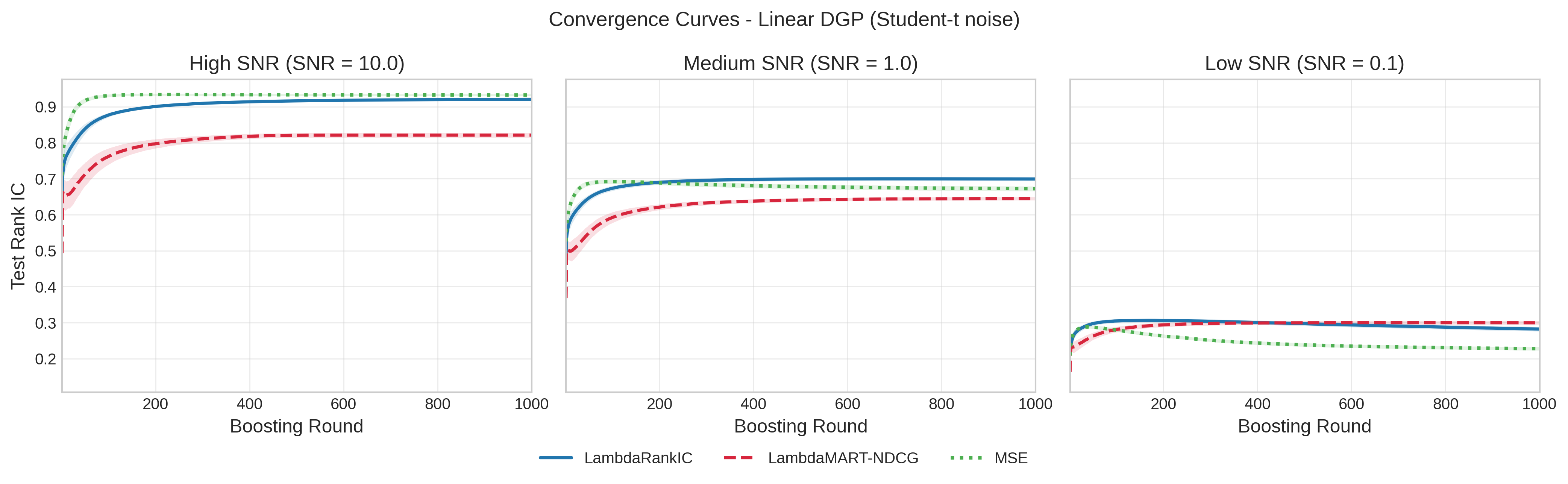}
    \caption{Convergence curves}
  \end{subfigure}

  \vspace{6pt}

  \begin{subfigure}{\textwidth}
    \centering
    
    \includegraphics[width=\textwidth]{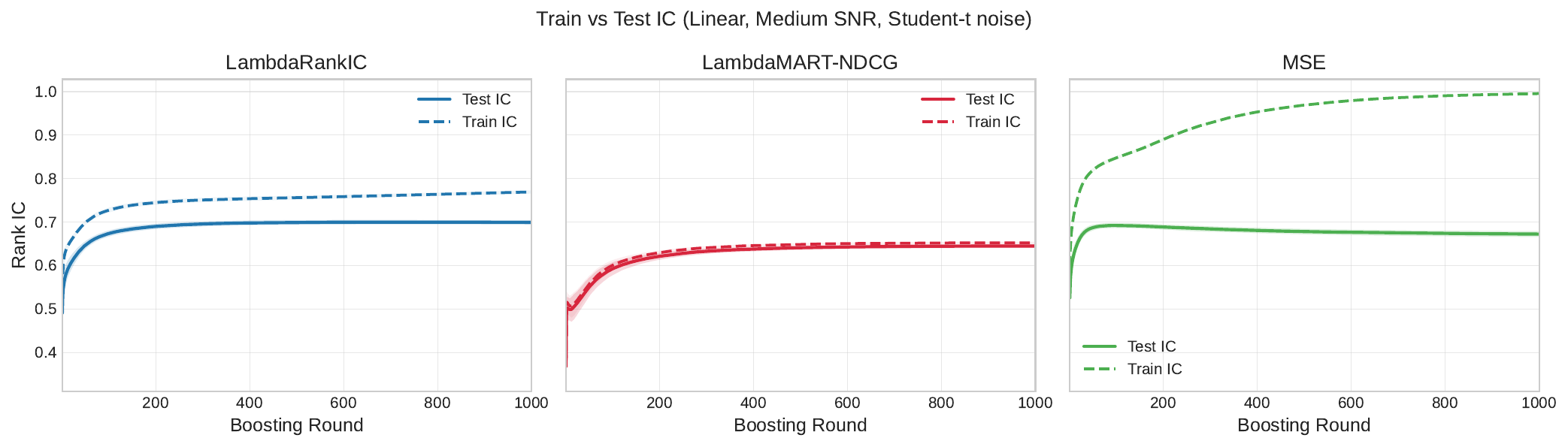}
    
    \caption{Train-test Gap: Medium SNR}
  \end{subfigure}
   \begin{minipage}{\textwidth}
    \vspace{6pt}
    \footnotesize \emph{Notes.} (1) Panel (a) shows the convergence curves across different signal-to-noise regimes. (2) Panel (b) shows train-test gap of the three learning objectives under the medium signal-to-noise regime.
  \end{minipage}

\end{figure}

Figure~\ref{fig:comparison} presents the learning curves and train--test gaps of the three models over 1000 boosting rounds, allowing us to examine both convergence dynamics and generalization behavior. Table~\ref{tab:linear_peak_round} reports the corresponding peak test Rank~IC values. The results support two related conclusions. First, the ranking-based objectives are more robust in the heavy-tailed, low-SNR regime, which is more representative of real-world financial data. As shown in Figure~\ref{fig:comparison}, the model with MSE objective peaks in early training rounds, and its train--test gap is substantially larger than those of the two ranking-based objectives in the low-SNR setting. Consistently, Table~\ref{tab:linear_peak_round} shows that MSE achieves a lower peak Rank~IC (0.2480) than LambdaRankIC (0.2803). This pattern suggests that MSE is more prone to overfitting and more sensitive to heavy-tailed noise. Second, comparing the two rank-based objectives, LambdaRankIC substantially outperforms LambdaRank-NDCG in test Rank~IC. This is consistent with the fact that the NDCG objective optimizes a top-heavy ranking criterion, whereas our target metric is full cross-sectional Rank~IC.

One might argue that the higher peak Rank~IC achieved by MSE and NDCG in high and medium SNR regimes suggests that the superiority of LambdaRankIC is confined solely to low-SNR environments. However, as shown in Table~\ref{tab:linear_peak_round}, LambdaRankIC reaches its highest Rank~IC at the 999th iteration, which is close to the predefined upper limit (1000) for boosting rounds. This fixed budget constraint suggests that the model has not yet fully converged, and that performance may improve further with additional boosting rounds. Furthermore, even under this constrained training setup, the gap between the highest overall peak Rank~IC and that of LambdaRankIC is marginal (less than 0.05), indicating no significant underperformance by our proposed method.

In summary, our simulations show that the proposed LambdaRankIC objective offers clear advantages over existing methods. Compared with the common practice of predicting return levels using MSE loss, LambdaRankIC is less prone to overfitting and is less sensitive to noise, especially in challenging low-SNR regimes. Our results also indicate that off-the-shelf ranking objectives are often top-heavy, emphasizing only the highest-ranked items. This objective mismatch leads to lower out-of-sample Rank~IC.

\begin{table}[H]
\centering
\caption{Peak Test Rank IC}
\label{tab:linear_peak_round}
\begin{tabular}{lcccccc}
\toprule
& \multicolumn{2}{c}{LambdaRankIC} & \multicolumn{2}{c}{NDCG} & \multicolumn{2}{c}{MSE} \\
\cmidrule(lr){2-3} \cmidrule(lr){4-5} \cmidrule(lr){6-7}
SNR & Peak IC & Peak Round & Peak IC & Peak Round & Peak IC & Peak Round \\
\midrule
High   & 0.8207 & 999   & 0.6697 & 568   & \textbf{0.8666} & 235 \\
Medium & 0.6344 & 665   & 0.5165 & 964.5 & \textbf{0.6467} & 98.5 \\
Low    & \textbf{0.2803} & 171   & 0.2381 & 691.5 & 0.2480 & 38 \\
\bottomrule
\end{tabular}
\end{table}


\section{Benchmarking with Real Financial Data}
\label{sec:experiments}
\subsection{Dataset and Experimental Protocol}
\label{sec:dataset_protocol}
We evaluate the proposed methods in real financial predictions. We set up the task by generating investment signals for the next period. The raw signal is calculated from the next-period excess return, defined as
\begin{equation}
    r_{i,t+1}=\frac{p_{i,t+1}-p_{i,t}}{p_{i,t}}-r^f_{t+1}
\end{equation}
where $p_{it}$ is the stock price and $r^f_{t+1}$ is the risk-free rate. 
For regression-based models, this is the prediction target. For learning-to-rank models, this is the label for ranking.

The dataset is constructed from \citet{green2017characteristics} and contains 94 monthly firm-level characteristics. Following common practice in the empirical asset pricing literature \citep{kelly2019ipca,gu2020empirical}, we transform each characteristic into its cross-sectional rank. 
After dropping observations with missing values, the data
contains 2,746,083 stock-month observations spanning 21,396 securities from
January 1964 to December 2024. All models are trained and evaluated under the same rolling-window protocol. In each window, we use 120 months for training, 60 months for validation, and 12 months for testing, then advance the window by 12 months. The validation period is used for hyperparameter tuning.

We compare our method against three regression-based benchmarks: ordinary least-squares (OLS) regression, XGBoost regression with MSE loss (\texttt{reg:squarederror}), and neural-network regression.
OLS and neural-network models are implemented with scikit-learn.
For the neural-network baseline, we use a three hidden layer architecture following \citet{gu2020empirical}, with hidden layer sizes of 32, 16, and 8. We also compare our LambdaRankIC approach against two off-the-shelf learning-to-rank methods using XGBoost's LambdaRank implementation: standard pairwise ranking (\texttt{rank:pairwise}) and NDCG-optimized ranking (\texttt{rank:ndcg}).
The regression baselines are trained on raw future excess returns. The
learning-to-rank models use the same inputs and training, validation, and test splits, but we treat each month as one group for ranking (i.e., one query).

We report four ranking metrics: average monthly Rank~IC ($\overline{IC}$), its standard
deviation $\mathrm{Std}(IC)$, $\mathrm{ICIR} = \overline{IC}\,/\,\mathrm{Std}(IC)$, and $\mathrm{NDCG@100}$.
We also report portfolio-based performance. Specifically, following the existing literature \citep{gu2020empirical}, each month we sort assets into deciles based on model scores and construct a value-weighted long--short portfolio that buys the top decile and sells the bottom decile, rebalanced monthly.
We report the portfolio's average monthly return ($\mathrm{Return}$),
volatility ($\mathrm{Vol.}$), annualized Sharpe ratio ($\mathrm{Sharpe}$), and maximum drawdown
($\mathrm{MDD}$). All metrics except the Sharpe ratio are reported as percentages.

\subsection{Results}
\label{sec:results}
Table~\ref{tab:joc-benchmark} reports the consolidated out-of-sample benchmark
results. The results support the conclusion that directly optimizing an IC-aligned objective yields the best cross-sectional ranking performance. Our approach (RankIC) is the strongest model on three of the four ranking metrics:
it attains the highest average IC, the highest ICIR, and the best NDCG@100. Compared with the regression-based methods, learning-to-rank methods overall achieve higher average IC, ICIR, and NDCG@100. Both pairwise and NDCG-based methods achieve an average Rank~IC of about 0.08, while our method attains a substantially higher IC of 0.115, an increase of over 30\% relative to these baselines.
As expected, regression-based methods achieve considerably lower IC values, averaging about 0.044.
Turning to portfolio returns, our method achieves the highest monthly average return at 2.22\%. Its annualized Sharpe ratio is also the highest at approximately 0.92. While the volatility is moderately higher compared with regression-based methods, the return is substantially higher, leading to a superior overall Sharpe ratio.

\begin{table}[t]
\centering
\scriptsize
\setlength{\tabcolsep}{4pt}
\caption{Out-of-sample Benchmark Results}
\label{tab:joc-benchmark}
\begin{tabular}{lcccccccc}
\toprule
& \multicolumn{4}{c}{Ranking Metrics} & \multicolumn{4}{c}{Portfolio Returns} \\
\cmidrule(lr){2-5} \cmidrule(lr){6-9}
Method &  $\overline{IC}\uparrow$ & Std(IC) $\downarrow$ & ICIR $\uparrow$ & NDCG@100 $\uparrow$ & Return (\%) $\uparrow$ & Vol. (\%) $\downarrow$ & Sharpe $\uparrow$ & MDD (\%) $\downarrow$ \\
\midrule
\multicolumn{9}{l}{\textit{Regression-based}} \\
OLS Regression  & 0.0471 & 0.0706 & 0.6663 & 0.5077 & 0.97 & 4.56 & 0.740 & \textbf{43.78} \\
XGBoost Regression & 0.0418 & 0.0917 & 0.4561 & 0.4924 & 1.25 & 6.21 & 0.696 & 58.29 \\
MLP Regression     & 0.0428 & \textbf{0.0531} & 0.8059 & 0.4967 & 1.08 & \textbf{4.52} & 0.831 & 46.28 \\
\addlinespace
\multicolumn{9}{l}{\textit{Learning-to-rank-based}} \\
Pairwise         & 0.0828 & 0.1153 & 0.7181 & 0.5328 & 1.42 & 8.68 & 0.566 & 81.59 \\
NDCG             & 0.0863 & 0.1063 & 0.8122 & 0.5312 & 1.20 & 8.32 & 0.501 & 83.24 \\
RankIC (ours)    & \textbf{0.1148} & 0.1114 & \textbf{1.0308} & \textbf{0.5415} & \textbf{2.22} & 8.31 & \textbf{0.923} & 64.29 \\
\bottomrule
\end{tabular}
\parbox{0.85\linewidth}{\footnotesize \vspace{6pt} 
\linespread{0.9}\selectfont \emph{Notes.} (1) $\uparrow$ ($\downarrow$) indicates higher (lower) is better; \textbf{bold} marks the best value in each column. (2) Ranking metrics are computed from next-month model predictions. (3) Portfolio metrics are based on value-weighted top-minus-bottom decile returns formed monthly using model scores. (4) Return, volatility, and maximum drawdown are reported in monthly percent. (5) Sharpe ratios are annualized: $\text{Sharpe} = Return / Vol. \times \sqrt{12}$.}

\end{table}

To gain deeper insights into the model predictions and their corresponding portfolio performance, we stratify the cross-sectional stocks into ten deciles for each method, as reported in Table~\ref{tab:deciles}. Stocks with the highest predicted scores are allocated to Decile 10, while those with the lowest are placed in Decile 1. Several key observations emerge from this analysis.
First, across all methods, decile returns and Sharpe ratios generally increase from Decile 1 to Decile 10, while volatilities broadly decrease. This trend validates the cross-sectional sorting capability of all evaluated models. Second, as shown in Panel A, a comparison among the learning-to-rank methods reveals that our proposed RankIC more effectively identifies underperforming stocks than the other two learning-to-rank methods, producing strongly negative returns in the lowest decile (Decile 1). This stronger short signal significantly enhances profitability when constructing long--short portfolios when shorting the bottom decile.
Third, compared with traditional regression models, LTR methods successfully generate profitable short signals. The regression methods fail in this regard: the lowest deciles of all three regression models yield positive returns, which severely degrades their long--short portfolio performance. Finally, although RankIC incurs a higher maximum drawdown than the regression baselines, it maintains the lowest drawdown among all LTR models and achieves the highest Sharpe ratio across all six evaluated methods.

\begin{table}[h]
\scriptsize
\centering
\caption{Performance Metrics by Decile}
\label{tab:deciles}
\begin{tabular*}{\linewidth}{@{\extracolsep{\fill}}lcccccccccccc}
\toprule
\multicolumn{13}{l}{\textbf{Panel A: LTR Methods}} \\
\midrule
 & \multicolumn{4}{c}{LTR: RankIC} & \multicolumn{4}{c}{LTR: Pairwise} & \multicolumn{4}{c}{LTR: NDCG} \\
\cmidrule(lr){2-5} \cmidrule(lr){6-9} \cmidrule(lr){10-13}
Decile & Ret & Vol & SR & MDD & Ret & Vol & SR & MDD & Ret & Vol & SR & MDD \\
\midrule
1 & -1.20 & 10.08 & -0.41 & 100.00 & -0.34 & 10.27 & -0.12 & 99.70 & -0.27 & 9.95 & -0.09 & 99.44 \\
2 & -0.10 & 8.53 & -0.04 & 97.97 & 0.03 & 8.54 & 0.01 & 97.27 & 0.38 & 8.05 & 0.16 & 90.21 \\
3 & 0.44 & 7.35 & 0.21 & 91.56 & 0.31 & 6.76 & 0.16 & 91.99 & 0.54 & 7.30 & 0.26 & 85.47 \\
4 & 0.42 & 5.81 & 0.25 & 74.57 & 0.61 & 6.05 & 0.35 & 83.79 & 0.52 & 6.32 & 0.29 & 84.97 \\
5 & 0.59 & 5.36 & 0.38 & 71.95 & 0.75 & 5.28 & 0.49 & 69.69 & 0.69 & 5.42 & 0.44 & 71.69 \\
6 & 0.69 & 4.82 & 0.49 & 67.62 & 0.79 & 4.84 & 0.57 & 63.98 & 0.82 & 4.73 & 0.60 & 55.84 \\
7 & 0.86 & 4.52 & 0.66 & 46.44 & 0.93 & 4.62 & 0.70 & 44.71 & 0.81 & 4.79 & 0.59 & 47.91 \\
8 & 0.88 & 4.34 & 0.71 & 41.70 & 0.85 & 4.69 & 0.63 & 45.21 & 0.80 & 4.61 & 0.60 & 50.39 \\
9 & 0.94 & 4.37 & 0.75 & 48.52 & 0.89 & 4.84 & 0.64 & 49.26 & 1.02 & 4.65 & 0.76 & 45.23 \\
10 & 1.01 & 4.38 & 0.80 & 50.26 & 1.07 & 5.07 & 0.73 & 55.82 & 0.93 & 4.86 & 0.67 & 55.52 \\
\midrule
H--L & 2.22 & 8.32 & 0.92 & 64.29 & 1.42 & 8.69 & 0.57 & 81.59 & 1.20 & 8.32 & 0.50 & 83.24 \\
\midrule
\multicolumn{13}{l}{\textbf{Panel B: Regression Methods}} \\
\midrule
 & \multicolumn{4}{c}{Regression: OLS} & \multicolumn{4}{c}{Regression: XGBoost} & \multicolumn{4}{c}{Regression: MLP} \\
\cmidrule(lr){2-5} \cmidrule(lr){6-9} \cmidrule(lr){10-13}
Decile & Ret & Vol & SR & MDD & Ret & Vol & SR & MDD & Ret & Vol & SR & MDD \\
\midrule
1 & 0.19 & 5.66 & 0.12 & 85.70 & 0.12 & 7.90 & 0.05 & 97.03 & 0.16 & 6.71 & 0.08 & 96.75 \\
2 & 0.47 & 4.86 & 0.34 & 67.56 & 0.52 & 6.69 & 0.27 & 87.93 & 0.49 & 5.05 & 0.34 & 77.99 \\
3 & 0.75 & 4.59 & 0.57 & 52.54 & 0.59 & 5.39 & 0.38 & 66.29 & 0.67 & 4.59 & 0.50 & 51.62 \\
4 & 0.68 & 4.52 & 0.52 & 53.24 & 0.48 & 5.30 & 0.31 & 65.98 & 0.70 & 4.47 & 0.54 & 54.29 \\
5 & 0.78 & 4.66 & 0.58 & 52.64 & 0.84 & 5.24 & 0.56 & 52.26 & 0.81 & 4.38 & 0.64 & 44.92 \\
6 & 0.85 & 4.65 & 0.64 & 42.74 & 0.82 & 5.33 & 0.53 & 61.22 & 0.76 & 4.68 & 0.56 & 53.33 \\
7 & 0.93 & 4.81 & 0.67 & 53.54 & 0.84 & 4.86 & 0.60 & 54.07 & 0.88 & 4.88 & 0.63 & 51.08 \\
8 & 0.87 & 5.17 & 0.58 & 53.44 & 0.83 & 5.19 & 0.55 & 59.37 & 0.71 & 4.79 & 0.52 & 59.83 \\
9 & 1.03 & 5.38 & 0.66 & 61.73 & 0.95 & 5.40 & 0.61 & 53.63 & 1.00 & 5.32 & 0.65 & 54.02 \\
10 & 1.17 & 5.67 & 0.71 & 56.85 & 1.37 & 5.89 & 0.81 & 47.31 & 1.25 & 6.39 & 0.68 & 48.56 \\
\midrule
H--L & 0.97 & 4.56 & 0.74 & 43.78 & 1.25 & 6.21 & 0.70 & 58.29 & 1.08 & 4.53 & 0.83 & 46.28 \\
\bottomrule
\end{tabular*}

\vspace{3pt}
\parbox{\linewidth}{\footnotesize \linespread{0.9}\selectfont \emph{Notes.} 
This table reports decile portfolio performance based on model predictions. We report monthly return (Ret), monthly volatility (Vol), annualized Sharpe ratio (SR), and maximum drawdown (MDD) for each method. The high-minus-low (H--L) portfolio is constructed by taking a long position in stocks in the highest decile (Decile 10) and a short position in stocks in the lowest decile (Decile 1). All metrics except the Sharpe ratio are reported in percent.
}
\end{table}

We further visualize the cumulative returns of the long--short portfolios over the sample period in Figure~\ref{fig:cumulative_returns}. Notably, the portfolio constructed using RankIC begins to consistently outperform the other portfolios around 1985, and the performance gap between RankIC and the benchmark methods widens steadily thereafter. This sustained outperformance in cumulative returns provides further evidence of the effectiveness of our proposed method.

In contrast, the NDCG-based method underperforms all other approaches, particularly after 2008, highlighting the potential limitations of its top-heavy objective. The Pairwise method performs robustly, maintaining the second-highest cumulative return trajectory, especially in the later period. Among the regression-based approaches, the OLS (Normal) Regression model delivers the lowest returns, while the MLP and XGBoost models converge to similar performance levels by the end of the sample period.
Furthermore, the RankIC portfolio has remained notably resilient during recessionary periods, such as the Great Recession of 2007–2009 and the outbreak of the 2020 pandemic. This strong robustness suggests that our proposed method is well suited to navigating severe market downturns.
\begin{figure}[t]
\caption{Cumulative Returns}
\label{fig:cumulative_returns}
    \centering
    \includegraphics[width=\textwidth]{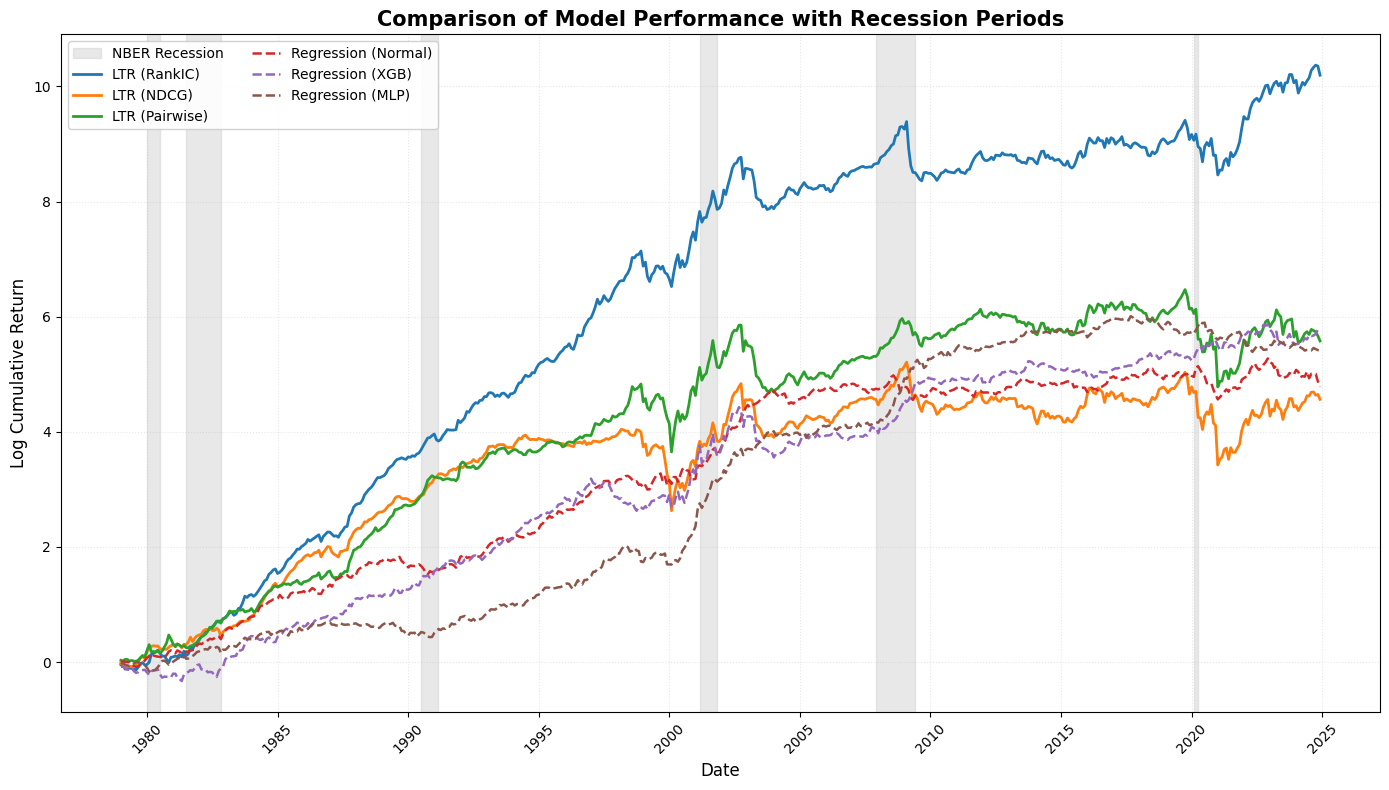}
    \begin{minipage}{\textwidth}
    \vspace{4pt}
    \rule{\textwidth}{0.4pt}
    \vspace{2pt}
    \footnotesize Note: The shaded periods denote recessions as defined by the National Bureau of Economic Research (NBER).    
  \end{minipage}
  
\end{figure}
\section{Conclusion}
\label{sec:conclusion}

Motivated by the need to directly optimize Rank IC in financial applications, we propose LambdaRankIC, a novel learning-to-rank objective that directly targets the Spearman rank correlation between model predictions and realized asset returns. We derive a closed-form expression for the lambda gradients within the LambdaRank framework and analytically show that the proposed objective optimizes a coarse upper bound of Rank IC.
We conduct extensive experiments using both simulated data and real financial market data. In simulations with noiseless data, we demonstrate that our approach is able to recover the underlying ranked structure of the data-generating process. Across all simulation settings, LambdaRankIC consistently outperforms existing approaches, including standard regression-based methods and learning-to-rank models optimized using the NDCG objective.
Furthermore, using a large-scale empirical dataset comprising 94 firm characteristics over a 30-year period, we show that LambdaRankIC achieves the highest out-of-sample Rank IC and ICIR, as well as superior portfolio performance in terms of returns and Sharpe ratios, relative to both regression models with various specifications and existing learning-to-rank methods.
Overall, our results demonstrate that LambdaRankIC represents a meaningful extension of existing learning-to-rank methodologies and provides a powerful new tool for ranking-based prediction problems in finance.

\subsection{Limitations and Future Directions}

Our approach is not without limitations. First, from a methodological perspective, LambdaRankIC inherits the same limitations from the original LambdaRank framework. In particular, the derived $\lambda$ values are pseudo-gradients rather than true gradients, since the Rank~IC objective is a non-smooth function of the model scores. Nevertheless, \citet{donmez2009efficiently} demonstrate that LambdaMART converges to a fixed point of the gradient under mild conditions, although no global convergence guarantees are available. We explicitly acknowledge this limitation and address it empirically by presenting convergence curves in Section~\ref{sec:experiments}. Moreover, we partially mitigate this concern by showing that the LambdaRankIC algorithm optimizes a coarse upper bound of $1 - \text{Rank IC}$, such that minimizing the LambdaRankIC loss indirectly maximizes Rank~IC. Relatedly, tighter upper bounds of Rank~IC may exist. For example, \citet{wang2018lambdaloss} derive alternative weighting schemes for the $\Delta$ terms in LambdaRank to more closely approximate target ranking metrics. However, they also caution that employing tighter surrogate bounds may increase the risk of overfitting. Exploring this trade-off in the context of Rank~IC optimization remains an interesting direction for future research.

Second, one of the key motivations of our work is the non-differentiability of the ranking operator. A promising direction is therefore to incorporate differentiable sorting and ranking operators \citep{blondel2020fast} into gradient boosting, enabling theoretically grounded gradients for Spearman rank correlation.

Third, our experiments focus on cross-sectional ranking, where each date defines a group of assets to be ranked. The framework extends naturally to time-series ranking. For example, we can treat an asset as a query and rank its expected returns across future periods. This formulation aligns with market-timing applications and time-series IC evaluation, and would allow the same RankIC-optimized objective to be applied to a structurally different prediction task.

Finally, our current implementation is limited to CPU-based training. Developing a parallelized or CUDA-enabled implementation to support GPU-accelerated training on large-scale datasets is a natural next step.

\clearpage
\bibliographystyle{abbrvnat}
\bibliography{refs}

\appendix




\section{Proof of Proposition \ref{prop:rankic_loss}}

\begin{proof}
By the definition of $\rho$ in Eq.~(9) and the fact that
\[
\bar{\hat r}=\bar{\tilde y}=\frac{n+1}{2},
\]
we have
\[
\rho
=
\frac{12}{n(n^2-1)}
\left(
\sum_{i=1}^n \hat r_i \tilde y_i
-
n\frac{(n+1)^2}{4}
\right).
\]

Using
\[
\hat r_i = 1+\sum_{j\neq i}\mathbb{I}_{\,s_i<s_j},
\]
we obtain
\[
\sum_{i=1}^n \hat r_i\tilde y_i
=
\sum_{i=1}^n \tilde y_i
+
\sum_{i=1}^n \sum_{j\neq i}\tilde y_i\,\mathbb{I}_{\,s_i<s_j}.
\]
Now consider the double sum
\[
\sum_{i=1}^n \sum_{j\neq i}\tilde y_i\,\mathbb{I}_{\,s_i<s_j}.
\]
Each ordered pair $(i,j)$ with $i\neq j$ appears exactly once in this sum. 
To rewrite the sum in terms of unordered pairs, group together the two ordered pairs
$(i,j)$ and $(j,i)$ associated with the same unordered pair $\{i,j\}$. 
For each unordered pair, relabel the indices so that
\[
\tilde y_i<\tilde y_j.
\]
Then the contribution of the ordered pair $(i,j)$ is
\[
\tilde y_i\,\mathbb{I}_{\,s_i<s_j},
\]
while the contribution of the ordered pair $(j,i)$ is
\[
\tilde y_j\,\mathbb{I}_{\,s_j<s_i}.
\]
Therefore, for each unordered pair $\{i,j\}$ such that $\tilde y_i<\tilde y_j$, the total contribution is
\[
\tilde y_i\,\mathbb{I}_{\,s_i<s_j}
+
\tilde y_j\,\mathbb{I}_{\,s_j<s_i}.
\]

Assuming that there are no ties in the scores, we have
\[
\mathbb{I}_{\,s_j<s_i}=1-\mathbb{I}_{\,s_i<s_j},
\]
and hence
\begin{align*}
\tilde y_i\,\mathbb{I}_{\,s_i<s_j}
+
\tilde y_j\,\mathbb{I}_{\,s_j<s_i}
&=
\tilde y_i\,\mathbb{I}_{\,s_i<s_j}
+
\tilde y_j\bigl(1-\mathbb{I}_{\,s_i<s_j}\bigr) \\
&=
\tilde y_j-(\tilde y_j-\tilde y_i)\mathbb{I}_{\,s_i<s_j}.
\end{align*}

Therefore,
\[
\sum_{i=1}^n \sum_{j\neq i}\tilde y_i\,\mathbb{I}_{\,s_i<s_j}
=
\sum_{\tilde y_i<\tilde y_j}
\left[
\tilde y_j-(\tilde y_j-\tilde y_i)\mathbb{I}_{\,s_i<s_j}
\right],
\]
where $\sum_{\tilde y_i<\tilde y_j}$ denotes summation over all unordered pairs $\{i,j\}$, indexed so that $\tilde y_i<\tilde y_j$.

Substituting this into the previous display gives
\begin{align*}
\sum_{i=1}^n \hat r_i\tilde y_i
&=
\sum_{i=1}^n \tilde y_i
+
\sum_{\tilde y_i<\tilde y_j}
\left[
\tilde y_j-(\tilde y_j-\tilde y_i)\mathbb{I}_{\,s_i<s_j}
\right] \\
&=
\sum_{i=1}^n \tilde y_i
+
\sum_{\tilde y_i<\tilde y_j}\tilde y_j
-
\sum_{\tilde y_i<\tilde y_j}
(\tilde y_j-\tilde y_i)\mathbb{I}_{\,s_i<s_j}.
\end{align*}

Since $\tilde y$ is a permutation of $\{1,\ldots,n\}$,
\[
\sum_{i=1}^n \tilde y_i
=
\sum_{k=1}^n k
=
\frac{n(n+1)}{2}.
\]

Moreover, to evaluate
\[
\sum_{\tilde y_i<\tilde y_j}\tilde y_j,
\]
fix a value $k\in\{1,\ldots,n\}$ and consider the unique index $j$ such that
\[
\tilde y_j = k.
\]
Here, $\tilde y_j$ is the true rank of sample $j$. In the sum above, the term $\tilde y_j$ is counted once for every index $i$ satisfying
\[
\tilde y_i < \tilde y_j = k.
\]
Since $\tilde y$ is a permutation of $\{1,\ldots,n\}$, the ranks smaller than $k$ are exactly
\[
1,2,\ldots,k-1,
\]
and each of these ranks appears exactly once. Therefore, there are exactly $k-1$ indices $i$ such that $\tilde y_i<k$. Hence, the value $k=\tilde y_j$ is added exactly $k-1$ times in the sum, so its total contribution is
\[
(k-1)k.
\]
Summing over all possible rank values $k=1,\ldots,n$, we obtain
\[
\sum_{\tilde y_i<\tilde y_j}\tilde y_j
=
\sum_{k=1}^n (k-1)k.
\]

Therefore,
\[
\sum_{\tilde y_i<\tilde y_j}\tilde y_j
=
\sum_{k=1}^n (k-1)k
=
\frac{n(n+1)(2n+1)}{6}-\frac{n(n+1)}{2}.
\]

Hence,
\[
\sum_{i=1}^n \hat r_i\tilde y_i
=
\frac{n(n+1)(2n+1)}{6}
-
\sum_{\tilde y_i<\tilde y_j}
(\tilde y_j-\tilde y_i)\mathbb{I}_{\,s_i<s_j}.
\]

Substituting this into the expression for $\rho$ yields
\[
\rho
=
\frac{12}{n(n^2-1)}
\left[
\frac{n(n+1)(2n+1)}{6}
-
n\frac{(n+1)^2}{4}
-
\sum_{\tilde y_i<\tilde y_j}
(\tilde y_j-\tilde y_i)\mathbb{I}_{\,s_i<s_j}
\right].
\]
Since
\[
\frac{n(n+1)(2n+1)}{6}
-
n\frac{(n+1)^2}{4}
=
\frac{n(n^2-1)}{12},
\]
it follows that
\[
\rho
=
1-
\frac{12}{n(n^2-1)}
\sum_{\tilde y_i<\tilde y_j}
(\tilde y_j-\tilde y_i)\mathbb{I}_{\,s_i<s_j},
\]
and therefore
\[
1-\rho
=
\frac{12}{n(n^2-1)}
\sum_{\tilde y_i<\tilde y_j}
(\tilde y_j-\tilde y_i)\mathbb{I}_{\,s_i<s_j}.
\]
\end{proof}

\section{Hyperparameters Used in Simulation}

\label{ec:hyper}
\begin{table}[ht]
\centering
\label{tab:hyperparameters}
\begin{tabular}{ll}
\toprule
Hyperparameter & Value \\
\midrule
max\_depth & 8 \\
learning\_rate & \{0.001, 0.01, 0.1\} \\
subsample & 1.0 \\
colsample\_bytree & 1.0 \\
min\_child\_weight & 1 \\
tree\_method & hist \\
\bottomrule
\end{tabular}
\end{table}

\section{Characteristics used in Return Prediction}
\begin{longtable}{@{}p{0.22\textwidth}p{0.72\textwidth}@{}}
\toprule
Variable & Description \\
\midrule
\endfirsthead

\toprule
Variable & Description \\
\midrule
\endhead

\midrule
\multicolumn{2}{r}{\textit{Continued on next page}} \\
\endfoot

\bottomrule
\endlastfoot

absacc & Absolute value of \textit{acc} \\
acc & Working capital accruals \\
aeavol & Abnormal earnings announcement volume \\
age & \# of years since first Compustat coverage \\
agr & Asset growth rate \\
baspread & Bid-ask spread \\
beta & Beta \\
betasq & Market beta squared \\
bm & Book-to-market ratio \\
bm\_ia & Industry-adjusted book-to-market ratio \\
cash & Cash holdings \\
cashdebt & Cash flow to debt \\
cashpr & Cash productivity \\
cfp & Cash flow to price ratio \\
cfp\_ia & Industry-adjusted cash flow to price ratio \\
chatoia & Industry-adjusted change in asset turnover \\
chcsho & \% Change in shares outstanding \\
chempia & Industry-adjusted change in \# of employees \\
chinv & Change in inventory \\
chmom & Change in 6-month momentum \\
chpmia & Industry-adjusted change in profit margin \\
chtx & \% Change in tax expense \\
cinvest & Corporate investment \\
convind & Convertible debt indicator \\
currat & Current ratio \\
depr & Depreciation / PP\&E \\
divi & Dividend initiation \\
divo & Dividend omission \\
dolvol & Dollar trading volume \\
dy & Dividend to price \\
ear & Earnings announcement return \\
egr & Growth in common shareholder equity \\
ep & Earnings to price \\
gma & Gross profitability \\
grCAPX & \% change in capital expenditures \\
grltnoa & Growth in long term net operating assets \\
herf & Industry sales concentration \\
hire & Employee growth rate \\
idiovol & Idiosyncratic return volatility \\
ill & Illiquidity \\
indmom & Industry momentum \\
invest & Capital expenditures and inventory \\
lev & Leverage \\
lgr & Change in total liabilities \\
maxret & Maximum daily return \\
mom12m & 12-month momentum \\
mom1m & 1-month momentum \\
mom36m & 36-month momentum \\
mom6m & 6-month momentum \\
ms & Financial statement score \\
mve & Size \\
mve\_ia & Industry-adjusted size \\
nincr & Number of earnings increases \\
operprof & Operating profitability \\
orgcap & Organizational capital \\
pchcapx\_ia & Industry adjusted \% change in capital expenses \\
pchcurrat & \% change in current ratio \\
pchdepr & \% change in depreciation \\
pchgm\_pchsale & \% change in gross margin - \% change in sales \\
pchquick & \% change in quick ratio \\
pchsale\_pchinvt & \% change in sales - \% change in inventory \\
pchsale\_pchrect & \% change in sales - \% change in A/R \\
pchsale\_pchxsga & \% change in sales - \% change in SG\&A \\
pchsaleinv & \% change in sales-to-inventory \\
pctacc & Percent accruals \\
pricedelay & Price delay \\
ps & Financial statements score \\
quick & Quick ratio \\
rd & R\&D increase \\
rd\_mve & R\&D to market capitalization \\
rd\_sale & R\&D to sales \\
realestate & Real estate holdings \\
retvol & Return volatility \\
roaq & Return on assets \\
roavol & Earnings volatility \\
roeq & Return on equity \\
roic & Return on invested capital \\
rsup & Revenue surprise \\
salecash & Sales to cash \\
saleinv & Sales to inventory \\
salerec & Sales to receivables \\
secured & Secured debt \\
securedind & Secured debt indicator \\
sgr & Sales growth \\
sin & Sin stocks \\
sp & Sales to price \\
std\_dolvol & Volatility of liquidity (dollar trading volume) \\
std\_turn & Volatility of liquidity (share turnover) \\
stdacc & Accrual volatility \\
stdcf & Cash flow volatility \\
tang & Debt capacity/firm tangibility \\
tb & Tax income to book income \\
turn & Share turnover \\
zerotrade & Zero trading days \\

\end{longtable}

\end{document}